\newtheorem{theorem}{Theorem}[section]
\newtheorem{lemma}[theorem]{Lemma}
\newtheorem{proposition}[theorem]{Proposition}
\newtheorem{corollary}[theorem]{Corollary}
\newtheorem{remark}{Remark}
\newtheorem{proof}{Proof}
\newcommand\smul{\cdot}
\title{Characterizing Ambiguity in Light Source Invariant Shape from Shading\thanks{This work was
supported by the National Science Foundation.}} 
\author{Benjamin Kunsberg,  Steven W. Zucker \thanks{
\text{benjamin.kunsberg@yale.edu}. Questions, comments, or corrections
to this document may be directed to that email address.}}
\begin{document}
\maketitle
\newcommand{\slugmaster}{%
\slugger{siims}{xxxx}{xx}{x}{x--x}}

\begin{abstract}
Shape from shading is a classical inverse problem in computer vision.  This shape reconstruction problem is inherently ill-defined;  it depends on the assumed light source direction.  We introduce a novel mathematical formulation for calculating local surface shape based on covariant derivatives of the shading flow field, rather than the customary integral minimization or P.D.E approaches.  On smooth surfaces, we show second derivatives of brightness are independent of the light sources and can be directly related to surface properties.  We use these measurements to define the matching local family of surfaces that can result from any given shading patch, changing the emphasis to characterizing ambiguity in the problem. We give an example of how these local surface ambiguities collapse along certain image contours and how this can be used for the reconstruction problem.

\end{abstract}


\section{Introduction}

\begin{figure}[t]
\begin{center}
\includegraphics[width= 1 \linewidth]{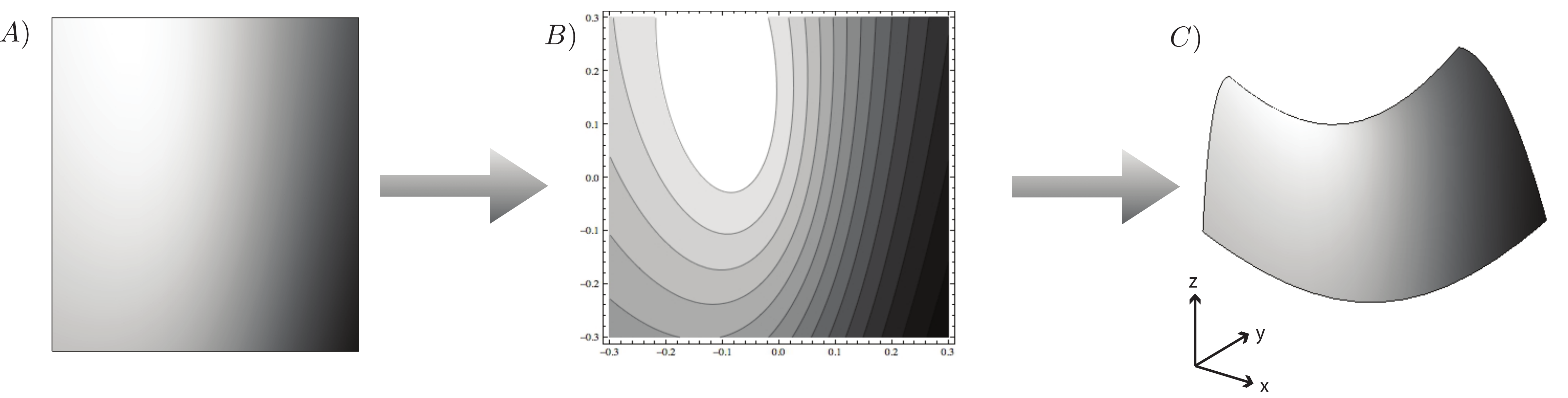}
\caption{Classical methods attempt to go from pixel values of the image (A) to the surface (C).  In this work, we use the intermediate representation of the isophote structure (B).  This is both supported by biological mechanisms and allows us to use the mathematical machinery of vector fields.}
\end{center}
\end{figure}

	In 1866, Ernst Mach \cite{mach} formulated the image irradiance equation to study which surfaces humans perceive from different ``light surfaces." He recognized the difficulty in solving this equation ( ``many curved surfaces may correspond to one light surface even if they are illuminated in the same manner") so he focused on studying cylinders.  In analogous fashion, the modern shape from shading community has attempted to resolve this ambiguity with the use of priors on the surface \cite{Barron13}, light source direction(s) \cite{Freeman94, Barron13} and albedo \cite{Barron13}.  However, there is an alternative:  rather than attempting to resolve the ambiguity immediately using priors, one may parametrize the ambiguity and let other cues such as the apparent boundary, highlight lines or cusps, resolve them.  Understanding such ambiguity is the focus of this paper.
	
	To define this ambiguity, we need to derive image properties that are invariant to the direction of the light source.   We will prove that ordinary second derivatives of the image irradiance do not depend on the direction(s) of the light source(s), but rather only on the local surface derivatives and other image derivatives.   The image derivatives can then be used to restrict the potential surfaces corresponding to a local shading patch, regardless of the light source.  We effectively ``cancel out" the light source from the problem.
	In summary, the visible interaction (image data) of light source and surface shape can be separated into two different types of information.  One of these types is dependent on both the light source and surface shape and thus is not useful for a shape from shading algorithm that does not assume or even represent the light source(s).   However, we have isolated components that are only dependent on surface shape and other image properties; thus, measurement of these components can be used to solve for surface shape independent of the light source.

\subsection{Neurobiologial motivation}

\begin{figure}[t]
\begin{center}
\includegraphics[width= 1 \linewidth]{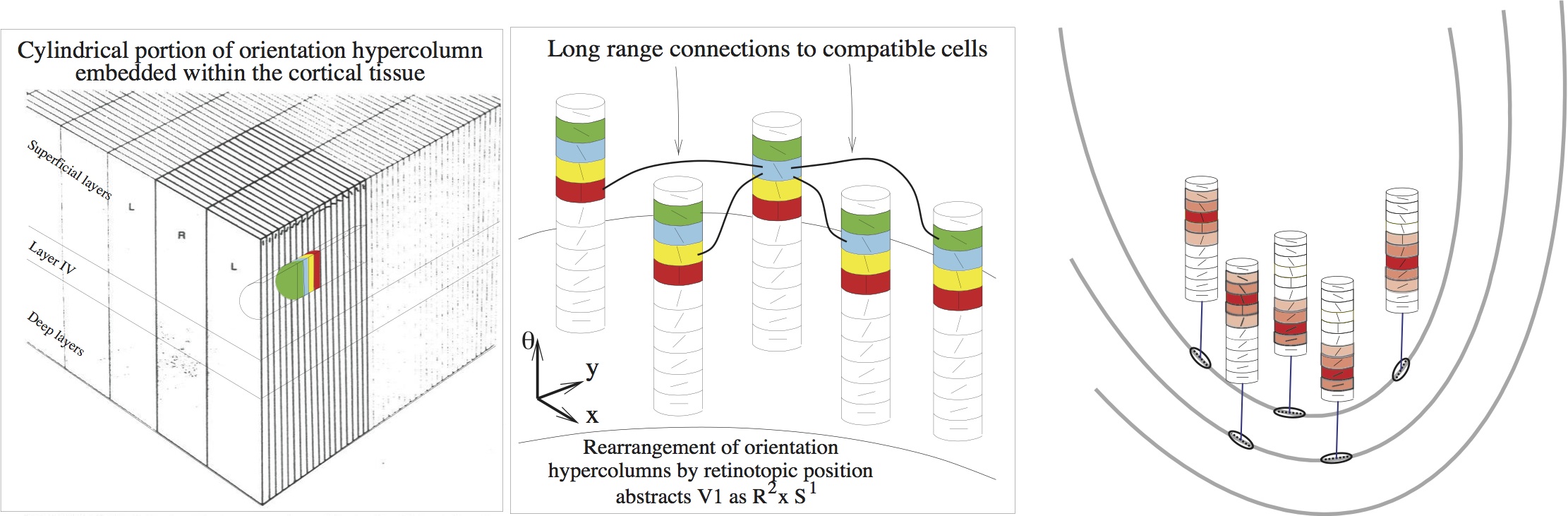}
\caption{V1 mechanisms applied to the isophote curves result in a shading flow field.
(left) Visual cortex contains neurons selective to the local orientation of image patches. In a shading gradient these
will respond most strongly to the local isophote orientation; ie, it's tangent.  A tangential
penetration across V1 yields a column of cells.
(middle) Abstracting the column of cells over a given (retinotopic) position spans all orientations; different orientations at nearby positions regularize the tangent map to the isophotes and reduce noise in local measurements.
(right) Illustrating the lift of the isophones into a V1 style representation. The mathematical analysis in this paper
extends this type of representation to the surface inference prolbem. As such it could be implemented by similar
cortical machinery in higher visual areas.}
\label{fig:v1_mechanisms}
\end{center}
\end{figure}

	Although Mach studied the retina, he did not predict the revolution in neurobiology that revealed how visual cortex is
organized around orientation. Due to the orientation-tuned cells of the V1 area of visual cortex \cite{Hubel88}, we focus on understanding how sets of orientations could correspond to surfaces.  This suggests what at first glance appears to be a small problem change: instead of seeking the map from images to surfaces (and light sources), the image should
first be lifted into an orientation-based representation.  This lift can be accomplished by considering
the image isophotes (\cite{Koenderink90, Koenderink80}); the lift is then the tangent map to these isophotes. This has arisen earlier in computer vision, and is called the shading flow field \cite{zucker}. A significant body of evidence is accumulating that such orientation based representations underly the perception of shape \cite{fleming}, but to our knowledge no one has previously formulated the surface inference problem from it. 

Since the shading flow field could be computed in V1 (Figure \ref{fig:v1_mechanisms}), we are developing a new approach to shape from shading
that is built directly on the information available in visual cortex.  Our computations could be implemented by a combination of feedforward and feedback projections,
supplemented with the long-range horizontal connections within each visual area.  Here we concentrate on developing the math.
The calculations are derived in differential-geometric terms, and a crisp curvature structure emerges from the
transport equations. As such it serves as the foundation of a model for understanding feedforward connections to
higher levels (surfaces) from lower levels (flows). See Fig.~\ref{fig:surface-sff}.

\subsection{Overview of Approach}
We summarize our approach in Fig. \ref{fig:big_pic}. Rather than attempting to infer a surface directly from the image and (e.g.) global light source priors, we think of the surface as a composite of local patches (charts). Each patch is described by its (patch of) shading flow, each of which implies a space of (surface patch, light source) pairs. Much of the formal content of this paper is a way to calculate them.

The possible local surface patches define a ``fibre'' for each patch coordinate; over the surface these fibres form a bundle.
Conceptually, then, the shape-from-shading problem amounts to finding a section through this bundle.  Once a section is obtained, the light source positions can be calculated directly.


There are several advantages to this approach. Ambiguity now is a measure on these fibres, and it can be reduced by certain (local) conditions, for example
curvature at the boundary \cite{Wagemans10}.  Thus it is consistent with Marr's Principle of Least Committment.
Second, the light source positions are essentially an emergent property rather than a prior assumption.
And finally, it mirrors the composite nature of visual inverse problems: there are those confingurations in which solutions are nicely defined, and there
are others that remain inherently ambiguous. A powerful illustration of this is provided by artists' drawings: a single new stroke may change the impression completely,
such as the indication of a highlight, while others may be lost in the cross-hatching of shading. Several cases are discussed at the end of this paper; and analyzed fully in a companion paper.

\begin{figure}[t]
\begin{center}
\includegraphics[height = 3.5in, width = 3in]{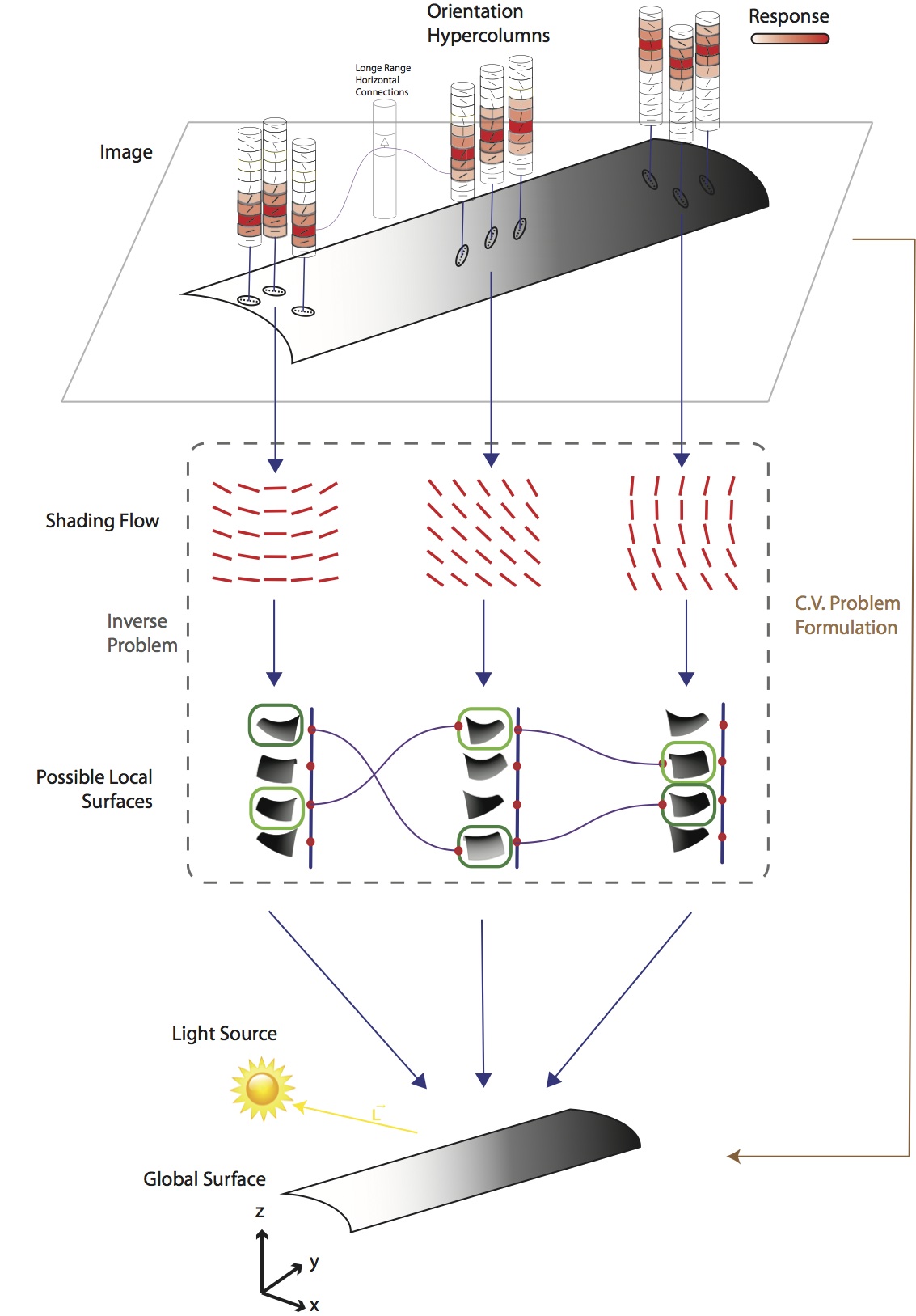}
\caption{This figure illustrates the ``big pictureÓ for our approach to shape from shading. Instead of inferring surfaces directly
from the image, we impose two fibre bundles between them. The first is the lift of the image into the shading flow, and
the second defines the fibre of possible surface patches that are consistent with a given patch of shading flow.
We do not assume a light source position, but instead will use assumptions on certain local features
to restrict the ambiguity. This amounts to finding a section through the bundle of possible local surfaces. The light source
position(s) are then an emergent property.}
\label{fig:big_pic}
\end{center}
\end{figure}

\subsection{Comparisons to Previous Work}

A major difference between our work and previous approaches is not to determine conditions when a P.D.E. system is well-posed \cite{Dieft81, Dupuis94, Oliensis91}, (which can be very difficult for the SFS problem, e.g. see viscosity solutions in \cite{Prados04, Lions93, Rouy92}) , or to argue which assumptions to make.  Rather, we determine how much ambiguity there is when the light source(s) are completely unknown.  Specifically, we investigate which local (surface, light source) pairs are consistent with the shading flow or image information.  This is a purely mathematical problem, with no judgements needed for a prior or other assumptions.

Horn's original algorithm \cite{horn85} solved a first order partial differential equation.  It required a known reflectance map (or very strong conditions on it) and known normals along some closed surface curve.  Under constant albedo, the known reflectance map is equivalent to a known light source direction.  In contrast, our method requires none of this information \emph{a priori}.  

We emphasize that Horn's (and Mach's) method calculates characteristic curves independently.  However, there is information to be exploited among nearby characteristic curves; thus, a method that calculates local surface shape over a 2D image patch (such as ours)  will need less initial information.  This insight is key to the
integral minimization approaches (e.g. \cite{Forsyth11, faugeras, Brooks86, Ikeuchi81}). These are often supplemented with a geometric regularization term or require a unit area assumption \cite{Forsyth11}. (We require no such assumption.) For more information on the various approaches, see \cite{zhang99}.

While integral approaches work over an area, they still directly analyze the image. However, using the shading flow as an intermediate representation provides a very different coordinate system for regularization.  But in addition to the biological motivation for using the shading flows, there is also very useful mathematical machinery  that can be applied once we consider the data as a vector field, rather than just pixel values.  Therefore, we will think of shape from shading as a map from vector fields onto ``shape space."

 Our work extends Pentland \cite{pentland84}.  He classified surfaces in general categories (plane, cylinder, saddle, etc.) based on the signs of the second derivatives of the intensity at a point.  However, this is a broad classification; there is clearly more information to be used than just the sign.   In this paper, we also use information contained in the second derivatives of intensity, although we prove the exact correspondence to the surface curvatures.

Much psychophysical work in SFS is in the spirit of Pentland \cite{pentland84}.  Modern researchers use very simplified stimuli \cite{Sun12, Wagemans10}, although some classical papers question the need for light source assumptions \cite{Mingolla86}. The integration between boundary information and shading is also classical \cite{Shapley85, Ramachandran88}, but understanding why this is perceptually salient follows from our analysis.

Belhumeur et al. (\cite{Bel98,Bel99}) provides another point of reference for our approach.  Consider the space $K$ of all possible Lambertian shading images resulting from all possible smooth surfaces with all possible light source directions.  Fixing a particular surface, a cross section of all possible images corresponding to a particular surface as we vary light source directions defines the illumination cone.  It can be calculated with at most 5 different images of the same surface.  However, there is still the problem of going from illumination cone to surface.  Although this analysis is useful, it does not lead directly to a shape from shading algorithm.

Alternatively, one could fix a single image $I$ and consider the cross section of $K$ defined by that image.  This cross section consists of all possible surfaces and light source pairs that would have resulted in $I$.  We argue that this cross section of $K$ is the more useful one to consider, as our initial data is an image rather than a surface.  Our goal is to define that cross section by using the shading flow field.

In closing the introduction, we make two general observations. As we will show next, the surface isophote tangent at a point is solely dependent on the light source direction and the local Taylor approximation for the surface (up to second order).  For shape from shading, one needs to combine information from multiple isophotes; that is, one needs to implicitly or explicitly calculate some type of derivative of these isophote tangents.  Thus, we believe the correct viewpoint to solve shape from shading should be at least third-order.  

\begin{figure}[!ht]
\begin{center}
\includegraphics[width = 1.5in, height = 1.5in, keepaspectratio]{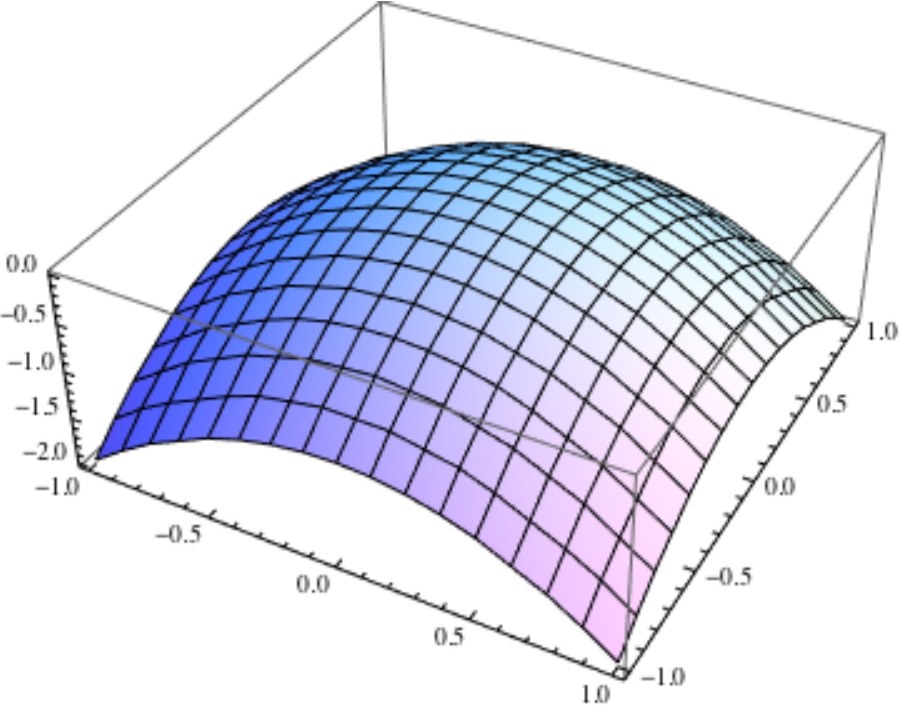} 
\includegraphics[height=1.5in,width=1.5in, keepaspectratio]{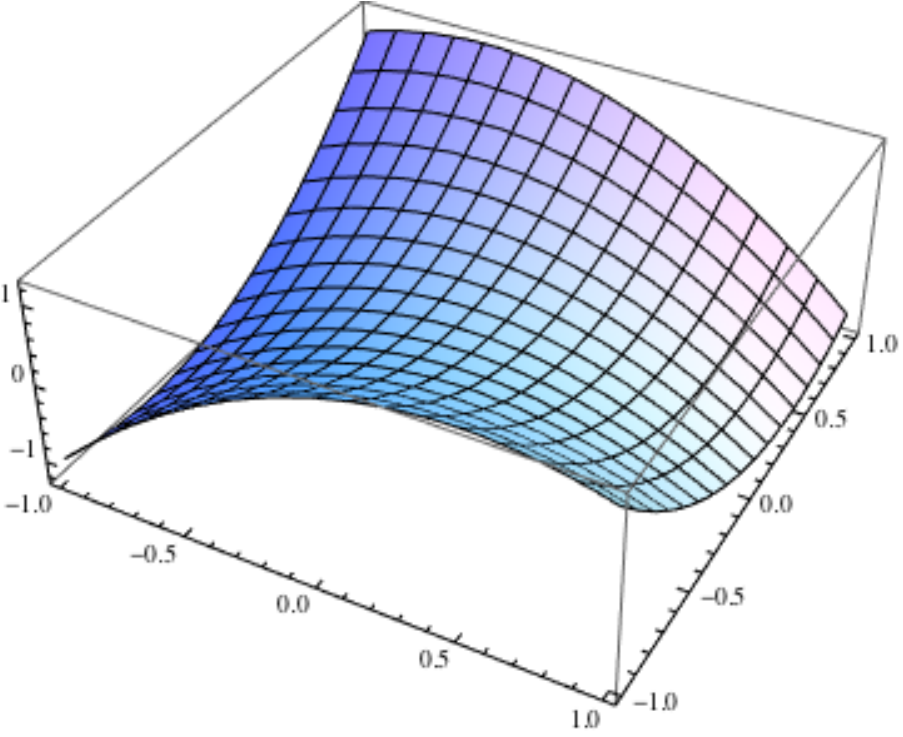} 
\includegraphics[height=1.5in,width=1.5in, keepaspectratio]{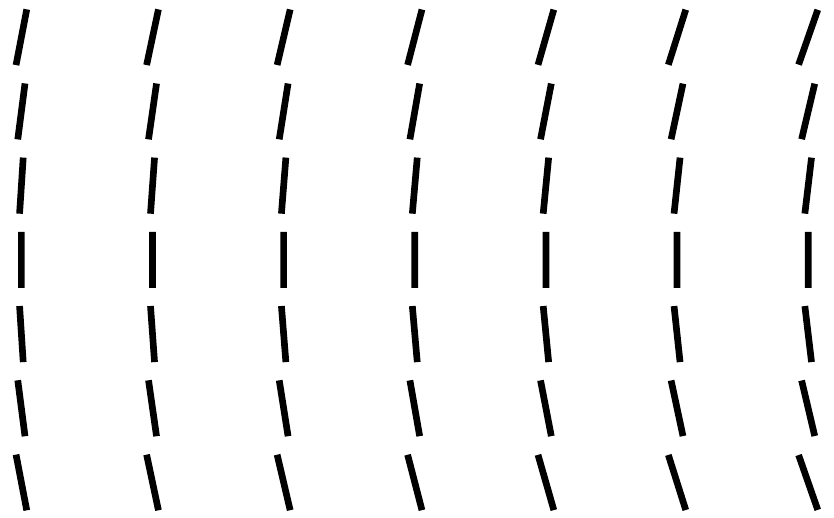} 
\caption{The shading flow field abstracts the image isophotes into a vector field. This has a biological basis (shading flow could be represented in V1 and V2 \cite{Hubel88}; surfaces in V4 and IT \cite{Connor08}). We base the surface inference problem on it, and not
on the raw image. Here we illustrate that
two surfaces (in this instance, a sphere and a saddle) can correspond to the same shading flow, provided the light source changes appropriately.  For the sphere, we need a light source at $(1, 0, 1)$; for the saddle, we need a light source at  $(1, 1, 1)$. Note that this ambiguity extends the convex/concave ambiguity \cite{Erens93, pentland84} normally considered in computer vision. It can be proved that, for the problem considered in this paper, this four-fold ambiguity is maximal.}
\label{fig:surface-sff}
\end{center}
\end{figure}

\section{The Shading Flow}

A smooth surface patch under diffuse Lambertian lighting and orthogonal projection yields
smooth image curves of constant brightness. The shading flow derives from these level curves of image intensity $I(x, y)$.
To construct a vector field  $V(x, y)$, we quantize these curves by taking their tangents over a predetermined image coordinate grid.  We call this vector field of isophote tangents the \emph{shading flow field}.
 In the limit, as the spacing of the grid points goes to zero, integral curves of the shading flow are precisely the intensity level curves.  Our goal is to use this 2D vector field to restrict the family of (surface, light source direction) pairs that could have resulted in the image (Fig. \ref{fig:workflow}).  In addition, we will use the complementary vector field of brightness gradients.  Note that, in the limit, these two vector fields of isophote tangents and brightness gradients together encapsulate the same information as the pixel values of the image.

Due to space limitations, we cannot analyze the shading flow here.  For computational work on regularizing and calculating the shading flow, see \cite{zucker}.  We simply remark that (i)
it regularizes certain errors due to noise in images, and (ii) it is invariant under some transformations of the albedo and surface. Importantly,  the shading flow field is also lower dimensional than the image: it can be described as a collection of angles at subsampled image positions.
%

Recoving shape from the shading flow will always be ill-posed; see the
ambiguity in Fig.~\ref{fig:surface-sff}.  However, at certain points on the surface, the ambiguities collapse in dimension.  For example, along the boundary of a smooth object, the view vector lies in the tangent plane \cite{Koenderink84}.   Along a suggestive contour \cite{decarlo03}, the dot product of the normal vector and the view vector is at a local minimum in the direction of the viewer.  At a highlight, the dot product of the light source direction and normal vector are at a local maximum.  All these types of points are identifiable in the image and provide additional geometric information that reduces ambiguity locally.  This leads to a plan for reconstruction:  First, parametrize the shading ambiguity in the general shading case.   Then, locate the points where the shading ambiguity vanishes (or reduces greatly) and solve for the local surface shape at those points.  Finally, calculate the more complex regions via a compatibility technique \cite{zucker} or interpolation.  In this paper, we focus on just the first step.  Papers corresponding to the second and third steps will be forthcoming.

\begin{figure}[t]
\begin{center}
\includegraphics[width= 1 \linewidth]{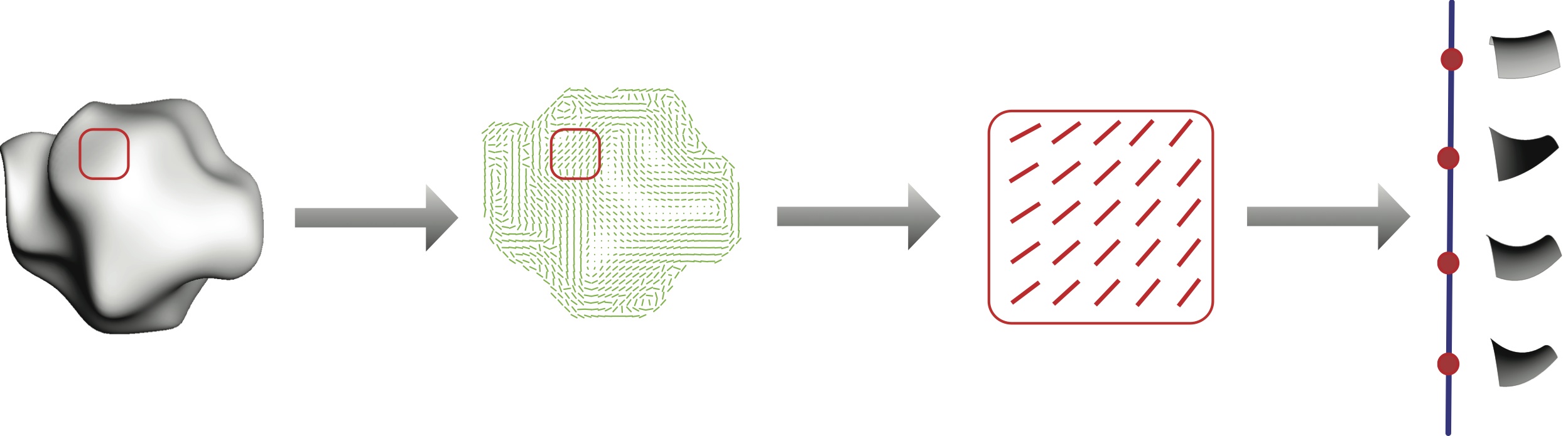}
\caption{This figure represents the workflow going from image to shading flow to a set of local surfaces.  Each surface along the \emph{fiber} needs a particular light source position to correspond to the given shading flow.  In the Application Section, we solve this particular problem for all 2nd order surfaces.}
\label{fig:workflow}
\end{center}
\end{figure}
In summary, we have the {\bf Problem Statement:}

\indent {\em Assume that a smooth Lambertian surface (representable as a graph of a function) with locally constant albedo is lit from any finite number of unknown directions.  Assume the image is captured through orthogonal projection.  Given the shading flow and brightness gradient vector fields, recover the entire set of surfaces consistent with the image information.}

\section{Analysis}
\subsection{Outline}
 Our goal is to translate the problem into the local tangent plane and then use the machinery of covariant derivatives and parallel transport to represent image derivatives (see Fig.\ref{fig:im_flow}) as a function of the surface vector flows.  A similar use of this machinery was applied for shape from texture in \cite{Garding95}.


\vspace{5mm}
\begin{enumerate}
  \item We write the brightness gradient and isophote as tangent plane conditions between the projected light source and shape operator.
  \item We take the covariant derivative of the projected light source and show it is independent of the direction of the light source.
  \item We take the covariant derivative of the isophote condition and separate into the differentiation on the projected light source and the differentiation on the shape operator.  
  \item  We take the covariant derivative of the shape operator applied to the isophote vector.  This requires several steps.
    \begin{enumerate}
    \item Expansion of the $dN$ operator in terms of the Hessian $H$.
    \item Covariant differentiation of $H$.
    \item Parallel transport of $H$.
    \item Covariant differentiation of $dN$ from covariant differentiation of $H$.
  \end{enumerate}
  \item Substitution and algebra
\end{enumerate}

\subsection{Notation}

\begin{figure}[t]
\begin{center}
\includegraphics[width= 0.6\linewidth]{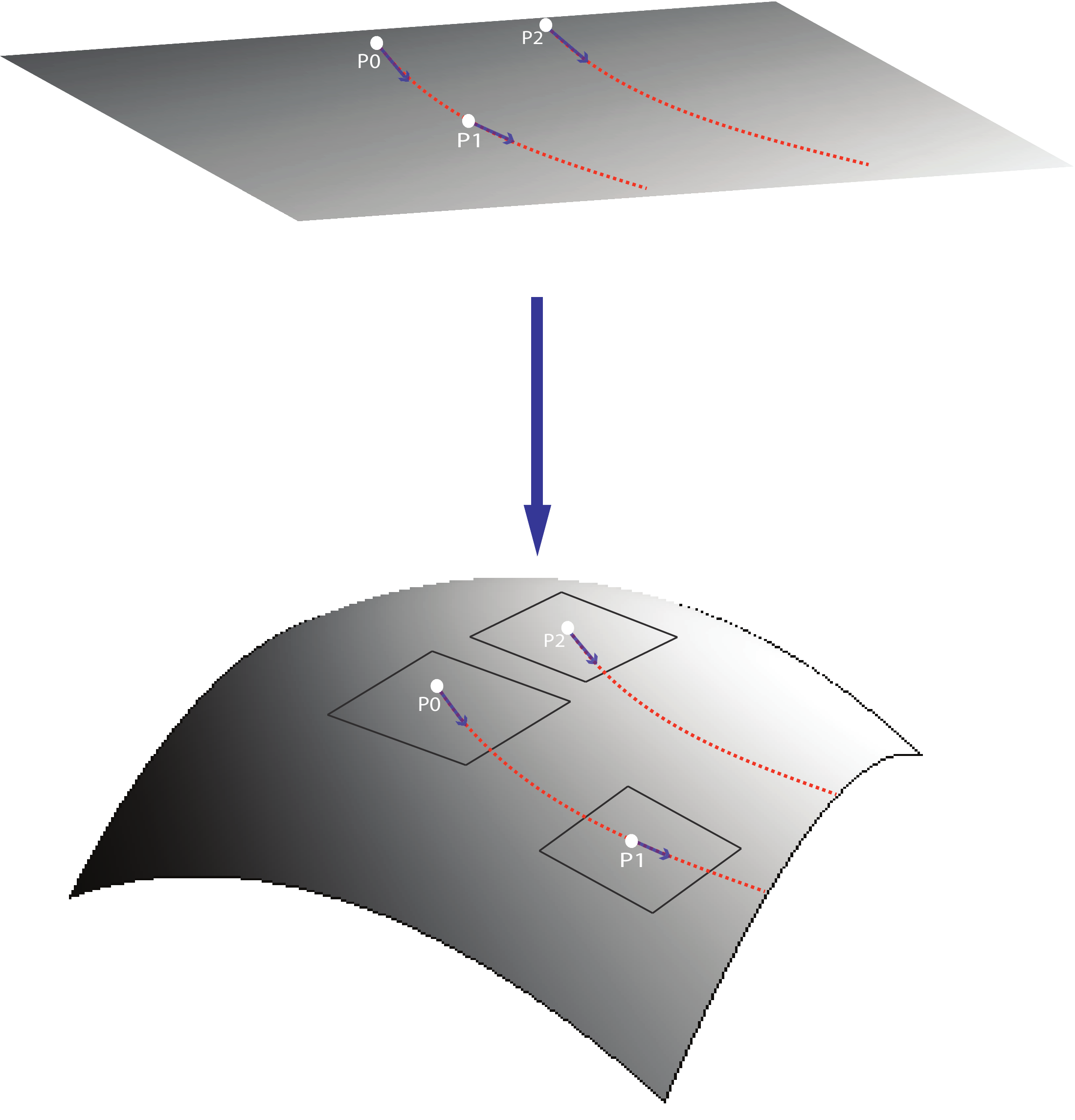}
\caption{Geometric setting for the shape-from-shading-flow problem. A surface (bottom) projects orthographically onto the image (top). Isophotes on the surface project to curves in the image. The shading flow is the tangent map to these image contours. When pulled back onto the surface, a tangent vector from the shading flow corresponds to a vector in the tangent plane to the surface. The natural operation on images is to ``walk along'' the shading flow, either along or across isophotes. We calculate the analagous operations on the surface to understand the conditions on surface curvature that result from changes in the shading flow.}
\label{fig:im_flow}
\end{center}
\end{figure}

The Lambertian lighting model is defined by:

$$I(x, y) = \rho \vec{L} \cdot N(x, y)$$

We derive local shading equations here.  Consider a small image patch under Lambertian lighting from an unknown light source.
This image patch corresponds to a local surface patch, which by Taylor's theorem we will represent as $S = \{ x, y, f(x, y) \}$ with $f(x, y) = c_1 x + c_2 y + c_3 x^2 + c_4 x y + c_5 y^2 + c_6 x^3 + c_7 x^2 y + c_8 x y^2 + c_9 y^3$.

Our goal is to understand the derivatives of intensity in terms of the coefficients $\{ c_i \}$.  It is essential that the order of the Taylor polynomial must be 3 since we shall consider second derivatives of image intensity and intensity is already dependent (via Lambertian lighting) on the first order derivatives of the surface.  Other analyses of SFS only consider 2nd order Taylor approximations \cite{pentland84}.

For reference, we define our complete notation in the Table \ref{tab:notTable}.  The symbols will be introduced throughout the analysis.  $V(x, y)$ is the shading flow field.  We normalize $V(x, y)$ to be of unit length in the image, although the corresponding surface tangent vectors have unknown length.   We denote unit length vectors in the image plane with a vector superscript, such as $\vec{v}$. The corresponding vectors on the surface tangent plane are defined by the image of  $\vec{v}$ under the  map composition of the differential $df: \mathbb{R}^2 \rightarrow \mathbb{R}^3$ and the tangent plane basis change $T: \mathbb{R}^3 \rightarrow T_p (S)$.  We will use the hat superscript to denote these surface tangent vectors, e.g. $\hat{v}$. 

Thus, 
$$\hat{v} =  T \circ df ( \vec{v})$$
\begin{table}[h]
\footnotesize
\caption{Notation Table} 
\centering 
\begin{tabular}{l c} 
\hline 
$p$ & a chosen point $(x_0, y_0)$ \\
$I_{p} (x, y)$ & an image patch centered at $p$ \\
$\nabla I (x, y)$ &  the brightness gradient \\
$S_{p} (x, y)$ & the corresponding (unknown) surface patch \\
$f(x, y)$ & the Taylor approximation  at $p$ of $S$ \\
$\{c_i \}$ & the coefficients of the Taylor approximation $f(x, y)$ \\
$T_p (S)$ & the tangent plane of $S$ at $p$ \\
$\vec{L}$ & the light source direction \\
$\vec{l_t} (p)$ & the projection of the $L$ onto the tangent plane \\
$\vec{e_i}$ & unit length standard basis vector in direction of coordinate axis $i$ \\
$N(x, y)$ & the unit normal vector field of $S$ \\
$V(x, y)$ & the vector field of isophote directions at each point (x, y) \\
$\vec{v} \in T_p (S)$ & the image unit length tangent vector in the direction of the isophote at $p$\\
$\vec{u} \in T_p (S)$ & the image unit length tangent vector in the direction of the brightness gradient at $p$\\
$\hat{w} \in T_p (S)$ & the tangent vector in direction $\vec{w}$ of unit length in the image, \\
& expressed in the surface tangent basis \\
$\vec{u} [ V ]$ & the directional derivative of the vector field $V$ in the direction $\vec{u}$ \\ 
$\nabla_{\vec{u} } V$ & the covariant derivative of the vector field $V$ in the direction $\vec{u}$ \\ 
$G$ & the first fundamental form (also called the metric tensor) \\
$II$ & the second fundamental form \\
$H$ & the Hessian \\
$dN$ & the differential Gauss Map, also called the Shape Operator \\
\hline 
\end{tabular}
\label{tab:notTable}
\end{table}
\begin{figure}[t]
\begin{center}
\includegraphics[trim = 100 0 0 0,width= 1 \linewidth]{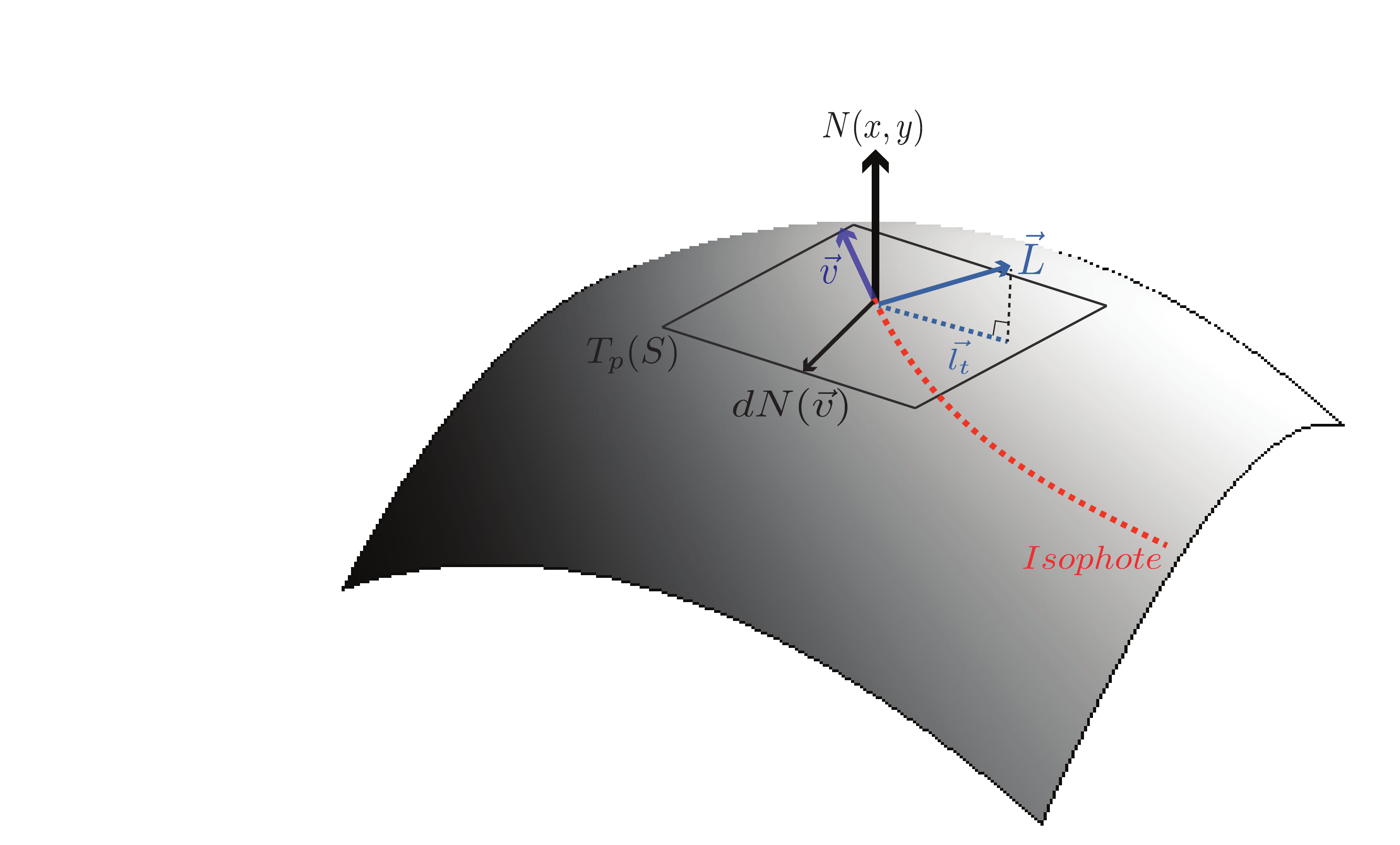}
\caption{A diagram explaining our defined surface properties.}
\end{center}
\end{figure}
\subsection{Brightness Gradient} 
\label{sec:brgr}

We derive the equations for the brightness gradient as a function of the light source and the second fundamental form.  Similar derivations (with different notation) appear in $\cite{Koenderink90}$.

The brightness gradient $\nabla I$ can be defined as a linear 1-form having as input unit length image vectors $\vec{w}$ and having as output a real number.  The output is the change in brightness along a step on the surface using $\hat{w}$. We write:

\begin{subequations}
\begin{align}
\nabla I \cdot \vec{w} & = \hat{w} \left[ \langle \vec{L}, \vec{N} \rangle \right] \\
& =  \langle \left( \nabla_{\hat{w}} \vec{L} \right) , \vec{N}  \rangle +  \langle \vec{L} , \left( \nabla_{\hat{w}} \vec{N} \right) \rangle 
\label{eqn:temp1}\\
& = 0 + \langle \vec{L}, dN(\hat{w}) \rangle\\
& = \langle \vec{l_t}, dN(\vec{w}) \rangle \\
& =  \vec{l}^T II \hat{w}
\end{align}
\end{subequations}

\noindent where the first term in equation (\ref{eqn:temp1}) is zero because the light source is fixed.  

\begin{proposition}
The brightness gradient $\nabla I$ can be expressed as the vector $\vec{l}_t^T II$.
\end{proposition}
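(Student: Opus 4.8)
The plan is to establish the identity by unwinding the definition of the brightness gradient as a linear $1$-form and applying the Leibniz rule for the metric-compatible covariant derivative, following the chain of equalities displayed just above the statement. First I would fix an arbitrary unit image vector $\vec{w}$ and its surface lift $\hat{w} = T \circ df(\vec{w}) \in T_p(S)$, and record that, by definition, $\nabla I \cdot \vec{w}$ is the rate of change of brightness $\langle \vec{L}, \vec{N}\rangle$ as one walks along $\hat{w}$ on the surface. This is exactly the content of the first line of the computation.

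Next I would differentiate the product $\langle \vec{L}, \vec{N}\rangle$ using metric compatibility, obtaining $\langle \nabla_{\hat{w}}\vec{L}, \vec{N}\rangle + \langle \vec{L}, \nabla_{\hat{w}}\vec{N}\rangle$. The first term vanishes because $\vec{L}$ is a fixed direction in $\mathbb{R}^3$, independent of surface position, so its derivative along any tangent direction is zero; this is the single physical input, encoding that the light source does not move. For the second term I would identify $\nabla_{\hat{w}}\vec{N}$ with the shape operator $dN(\hat{w})$, which is a tangent vector to $S$ at $p$. Since $dN(\hat{w}) \in T_p(S)$, only the tangential part of $\vec{L}$ contributes to the inner product, so $\vec{L}$ may be replaced by its orthogonal projection $\vec{l}_t$ onto $T_p(S)$, giving $\langle \vec{l}_t, dN(\hat{w})\rangle$. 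Finally I would invoke the standard relation between the shape operator and the second fundamental form, $\langle X, dN(Y)\rangle = II(X,Y)$ for tangent vectors $X, Y$, to rewrite this as $\vec{l}^T II \hat{w}$, where $\vec{l}$ is the coordinate vector of $\vec{l}_t$ in the tangent basis. Because $\vec{w}$ was arbitrary and both sides are linear in it, the $1$-form $\nabla I$ is represented by the row vector $\vec{l}_t^T II$, which is the claim.

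The main obstacle is bookkeeping rather than mathematics: one must keep the three identifications straight — the passage from the image vector $\vec{w}$ to the surface tangent vector $\hat{w}$ via $T \circ df$, the normalization mismatch ($\vec{w}$ has unit length in the image while $\hat{w}$ has unknown length on the surface), and the distinction between $II$ as a bilinear form and $dN$ as its associated linear operator, the two being related through the first fundamental form $G$. I would check that the contraction $\vec{l}^T II \hat{w}$ is written consistently with these conventions, so that the resulting expression is genuinely basis-independent as a $1$-form. Everything else follows immediately from metric compatibility of the connection and the vanishing of the ambient derivative of the constant vector field $\vec{L}$.
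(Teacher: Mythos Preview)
Your proposal is correct and follows exactly the same route as the paper: differentiate $\langle \vec{L}, \vec{N}\rangle$ along $\hat{w}$ via the Leibniz rule, drop the $\nabla_{\hat{w}}\vec{L}$ term since $\vec{L}$ is constant, identify $\nabla_{\hat{w}}\vec{N}$ with $dN(\hat{w})$, replace $\vec{L}$ by its tangential projection $\vec{l}_t$, and rewrite via $II$. Your added remarks on the bookkeeping between image and surface vectors and between $dN$ and $II$ are accurate and simply make explicit what the paper leaves implicit.
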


Along an isophote surface curve $\alpha(t)$, the brightness is constant.  Writing  $\vec{v} = \alpha ' (0)$, we have $ \langle \vec{l_t}, dN(\vec{v}) \rangle = \vec{l}^T II \hat{v} = \nabla I \cdot \vec{v} = 0$.

Thus, we conclude:
\begin{proposition}
Each isophote tangent vector $\vec{v}$ on $S$ is a function of the normal curvatures and light source and is defined by the equation 	
$$ \langle \vec{l_t}, dN(\vec{v}) \rangle = 0.$$
\end{proposition}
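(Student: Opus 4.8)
The plan is to obtain the equation directly from the preceding Proposition, which identifies the brightness gradient one-form with $\vec{l}_t^T II$, together with the defining property of an isophote. First I would recall that an isophote surface curve is, by definition, a curve along which the image brightness $I$ is constant; hence if $\alpha(t)$ is such a curve through $p$ with isophote tangent direction $\vec{v} = \alpha'(0)$ (unit length in the image), then the rate of change of brightness in the direction $\vec{v}$ must vanish, i.e. $\nabla I \cdot \vec{v} = 0$. This is the only new input beyond what has already been established.

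Next I would substitute the expression from the Brightness Gradient Proposition. Using the standard identity relating the shape operator $dN$, the second fundamental form $II$, and the metric, together with the correspondence $\hat{v} = T \circ df(\vec{v})$ between an image tangent vector and its representative in the surface tangent basis, we get the chain $\nabla I \cdot \vec{v} = \vec{l}^T II\, \hat{v} = \langle \vec{l_t}, dN(\vec{v}) \rangle$. Combining this with $\nabla I \cdot \vec{v} = 0$ yields the claimed equation $\langle \vec{l_t}, dN(\vec{v}) \rangle = 0$. The assertion that $\vec{v}$ is ``a function of the normal curvatures and light source'' is then just a reading of this equation as a linear constraint in the tangent plane: $dN$ encodes the normal curvatures through its eigenstructure, $\vec{l_t}$ is the projected light source, and the equation singles out the direction(s) $\vec{v}$ that are $II$-conjugate to $\vec{l_t}$ (equivalently, $dN$-orthogonal to it in the metric).

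Since this is essentially a one-line consequence of the previous Proposition, I do not expect a real obstacle; the only work is bookkeeping. Two points need care: (i) confirming that the isophote through $p$ is a genuine smooth curve with a well-defined tangent, which requires $\nabla I(p) \neq 0$, so that $\vec{v}$ is determined up to sign; and (ii) keeping track of which space each object lives in --- $\vec{v}$ and $\vec{l_t}$ are tangent-plane objects while $dN$ is the shape operator on $T_p(S)$ --- so that the pairing $\langle \vec{l_t}, dN(\vec{v}) \rangle$ is literally the same real number as $\vec{l}^T II\, \hat{v}$. Once these identifications are pinned down, the equalities $\langle \vec{l_t}, dN(\vec{v}) \rangle = \vec{l}^T II\, \hat{v} = \nabla I \cdot \vec{v} = 0$ close the argument immediately.
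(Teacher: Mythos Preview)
Your proposal is correct and matches the paper's own argument essentially line for line: the paper simply notes that along an isophote curve $\alpha(t)$ the brightness is constant, sets $\vec{v}=\alpha'(0)$, and reads off $\langle \vec{l_t}, dN(\vec{v}) \rangle = \vec{l}^T II\,\hat{v} = \nabla I \cdot \vec{v} = 0$ from the preceding Proposition. Your added remarks on the regularity condition $\nabla I(p)\neq 0$ and on the tangent-plane bookkeeping are more explicit than what the paper records, but the underlying route is identical.
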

In addition, we calculate each component of the brightness gradient via dot product with $\vec{e_i}$.
\begin{align}
I_x & =  \langle \vec{l_t}, dN(\vec{e_1}) \rangle \\
I_y & =   \langle \vec{l_t}, dN(\vec{e_2}) \rangle 
\end{align}

\subsection{Covariant Derivative of Projected Light Source}

One of the major advantages of our approach is that we do not need to assume a known light source direction.  In fact, using the covariant derivative described below, we can calculate the change in the projected light source vector without knowing where it is!  

\vspace{5mm}
\begin{remark}
Here, we briefly remark on the use of covariant derivatives for surfaces in $\mathbb{R}^2$.   We consider ``movements" in the image plane and sync them with ``movements" through the tangent plane bundle on the surface.  The problem is that the image plane vectors are on a flat surface, whereas the vectors on the surface tangent planes ``live" in different tangent spaces:  the surface tangent planes are all different orientations of $\mathbb{R}^2$ in $\mathbb{R}^3$.  Thus, to calculate derivatives via limits of differences, we need to ``parallel transport" nearby vectors to a common tangent plane.  The covariant derivative achieves this.  For our purposes, we think of the covariant derivative in two ways.  The first definition, which we use in this section, is the expression as the composition of a derivative operator in $\mathbb{R}^3$ and a projection operator onto a tangent plane.  This is an \emph{extrinsic definition} -- it is a definition that requires use of the ambient space.  The second definition, which we will use in Section \ref{sec:transport}, will be in terms of  parallel transport.
\end{remark}
\vspace{5mm}

We exploit the structure in $\vec{l}_t$:  it is the result of a projection from a fixed vector $L$ down into the tangent plane $T_p (S)$.  Thus, the change in $\vec{l}_t$ just results from changes in the tangent plane, which is dependent only on the surface curvatures and not on $L$.  Importantly, we avoid having to represent $L$ in our calculations by only considering its projected changes.  We now show this rigorously.
\begin{lemma}  
The covariant derivative of the projected light source is only dependent on the position of the light source through the observed intensity. Thus, 
$$\nabla_{\vec{u}} \vec{l_t} = - (\vec{L} \cdot \vec{N}) dN(\vec{u}).$$
\end{lemma}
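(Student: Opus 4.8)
The plan is to use the \emph{extrinsic} description of the covariant derivative recalled in the Remark: for an $\mathbb{R}^3$-valued field $X$ defined along the surface, $\nabla_{\vec u} X = \Pi\bigl(\hat u[X]\bigr)$, where $\hat u = T\circ df(\vec u)\in T_p(S)$, the bracket denotes the ordinary derivative of $X$ along a surface curve with velocity $\hat u$, and $\Pi:\mathbb{R}^3\to T_p(S)$ is orthogonal projection. The structural fact that makes everything work is that $\vec l_t$ is \emph{by definition} the orthogonal projection of the fixed ambient vector $\vec L$ onto the tangent plane, i.e.
$$\vec l_t = \vec L - (\vec L\cdot\vec N)\,\vec N .$$
So the only genuine step is: differentiate this identity in $\mathbb{R}^3$ along $\hat u$, then project.

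Carrying this out, since $\vec L$ is a constant direction we have $\hat u[\vec L]=0$, and the product rule gives
$$\hat u[\vec l_t] = -\,\hat u[\vec L\cdot\vec N]\,\vec N \;-\; (\vec L\cdot\vec N)\,\hat u[\vec N].$$
Now I would sort the two surviving terms into tangential and normal parts. The first term is a scalar multiple of $\vec N$, hence purely normal, so $\Pi$ annihilates it. The second term involves $\hat u[\vec N] = dN(\hat u)$, which is \emph{tangent} — this is the standard consequence of $\langle\vec N,\vec N\rangle\equiv 1$, which upon differentiation gives $\langle\hat u[\vec N],\vec N\rangle = 0$ — so $\Pi$ acts as the identity on it. Therefore
$$\nabla_{\vec u}\vec l_t = \Pi\bigl(\hat u[\vec l_t]\bigr) = -(\vec L\cdot\vec N)\,\Pi\bigl(dN(\hat u)\bigr) = -(\vec L\cdot\vec N)\,dN(\hat u),$$
which is exactly the claimed formula. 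The ``dependence on the light source only through the observed intensity'' assertion then follows because $\vec L\cdot\vec N = I/\rho$ by the Lambertian model, with $\rho$ the locally constant albedo; thus the scalar coefficient is directly readable from the image and no knowledge of the \emph{direction} of $\vec L$ is required.

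I do not expect a real obstacle here — the lemma is a coordinate-free consequence of the projection formula for $\vec l_t$ together with the tangency of the shape operator — but there are a few bookkeeping points to get right. First, one must be explicit that $\nabla$ is the surface (Levi-Civita) connection, so that the ambient derivative is followed by $\Pi$; it is precisely this projection that discards the term $\hat u[\vec L\cdot\vec N]\,\vec N$, which we could not compute without knowing $\vec L$. Second, the derivative must be taken along a curve lying on $S$ with velocity $\hat u$, not along a straight line in $\mathbb{R}^3$; this matters in principle, though for the constant field $\vec L$ it still gives $\hat u[\vec L]=0$. Third, one should explicitly invoke $dN(\hat u)\in T_p(S)$ so that $\Pi\bigl(dN(\hat u)\bigr)=dN(\hat u)$. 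None of this touches the Taylor coefficients $\{c_i\}$.
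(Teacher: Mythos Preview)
Your proposal is correct and follows essentially the same route as the paper: write $\vec l_t = \vec L - (\vec L\cdot\vec N)\vec N$, differentiate in $\mathbb{R}^3$ along a surface curve using that $\vec L$ is constant, and project onto $T_p(S)$ so that the $\vec N$-term drops and the tangent vector $dN(\hat u)$ survives. If anything, you are slightly more careful than the paper in justifying that $dN(\hat u)\in T_p(S)$ and in distinguishing $\vec u$ from $\hat u$.
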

\begin{proof}
Let $\Pi_{p_0}$ be the projection operator taking a vector in 3-space onto the tangent plane of $S$ at $p_0$.  Recall that the covariant derivative of a tangent vector can be expressed as the composition of a derivative operator and $\Pi$.

\begin{subequations}
\begin{align}
\nabla_{\vec{u}} \vec{l_t} & = \Pi_{p_0} \left( \frac{d \vec{l_t}}{dt} \right) \\
		   & = \Pi_{p_0} \left( \frac{d}{dt} (\vec{L} - (\vec{L} \cdot \vec{N}) \vec{N}) \right) \\
		   & = \Pi_{p_0} \left( \frac{d\vec{L}}{dt} - \frac{d}{dt} \left[ (\vec{L} \cdot \vec{N}) \vec{N} \right] \right) \\
		   & =\Pi_{p_0} \left(  0 - \frac{d}{dt} \left[ \vec{L} \cdot \vec{N} \right] \vec{N} - (\vec{L} \cdot \vec{N}) \frac{d\vec{N}}{dt} \right) \\
		   & =  \Pi_{p_0} \left( - \left[ \frac{d\vec{L}}{dt} \cdot \vec{N} +  \vec{L} \cdot \frac{d\vec{N}}{dt} \right] \vec{N} - (\vec{L} \cdot \vec{N}) dN(\vec{u}) \right) \\
		   & = -(\vec{L} \cdot \vec{N}) dN(\vec{u}) \label{eq:ls_change}
\end{align}
\end{subequations}
\end{proof}  

The fact that this change in the projected light source only depends on surface properties allows us to remove the light source dependence from the second derivatives of intensity $\{I_{vv}, I_{uv}, I_{uu} \}$.

\subsection{Covariant Derivative of the Isophote Condition}
\begin{figure}[t]
\begin{center}
\includegraphics[trim = 0 10mm 10mm 28mm, clip=true, width= 3.1in]{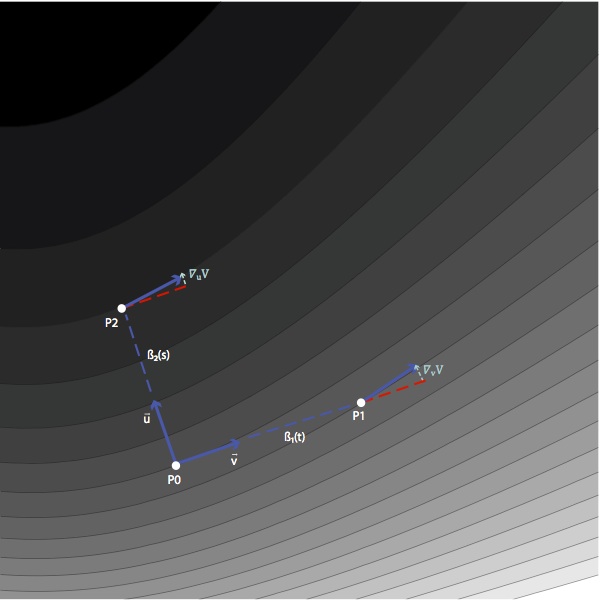}
\caption{A diagram explaining our use of the shading flow field.  As we move on $\beta_1(t)$ in the direction of the isophote 
$\vec{v}$ from $P0$ to $P1$, the flow field $V(x, y)$ changes by $\nabla_{\vec{v}} V$.   Similarly, we may move in direction
$\vec{u}$ along $\beta_2(s)$, which is perpendicular (in the image) to the isophote.  Then, our flow field changes by 
$\nabla_{\vec{u}} V$.  In Proposition 2, we relate these changes in closed form to the curvatures of the surface and the light source direction.}
\label{fig:sff-steps}
\end{center}
\end{figure}

We now use the changes in the brightness gradient and the isophote directions to restrict our surface parameters.  Let $\vec{v}$ be the unit length image vector in the direction of the isophote at an arbitrary point $p$.  Let $\vec{u}$ be the unit length image vector in the direction of the brightness gradient at $p$.  In the image, $\vec{v} \perp \vec{u}$ but the projected vectors $\hat{v}$ and $\hat{u}$ may not be orthogonal on the tangent plane at $p$.

In fact, considering these particular changes in $\vec{u}$ and $\vec{v}$ is equivalent to choosing a basis.  This will result in solving for equations of the three second derivatives $\{I_{vv}, I_{uv}, I_{uu} \}$, although we could have considered the changes in $\{I_x, I_y \}$ and instead solved for $\{I_{xx}, I_{xy}, I_{yy} \}$.  However, the equations simplify when choosing the basis defined by the isophote and brightness gradient.

To emphasize the conceptual picture, we will derive the $I_{vv}$ equation here and save the equations in the general case for the appendix.  

We start by first calculating $I_v$ and then taking the directional derivative of $I_v$ in the direction $\vec{v}$.  From Section \ref{sec:brgr}, we can write:

\begin{align}
0 = I_v = \nabla I \cdot \vec{v} =   \langle \vec{l_t}, dN(\vec{v}) \rangle 
\end{align}

\noindent Applying the directional derivative with respect to $\vec{v}$ on both sides and using the result from Equation \ref{eq:ls_change}:

\begin{subequations}
\begin{align}
0 & = v \left[  \langle \vec{l_t}, dN(\vec{v}) \rangle \right] \\
& = \langle \left( \nabla_{\vec{v}} \vec{l_t} \right), dN(\vec{v}) \rangle + \langle \vec{l_t},  \nabla_{\vec{v}} dN(\vec{v}) \rangle \\
& = \langle -(\vec{L} \cdot \vec{N}) dN(\vec{v}), dN(\vec{v}) \rangle + \langle \vec{l_t},  \nabla_{\vec{v}} dN(\vec{v}) \rangle \\
& = - I \langle dN(\vec{v}), dN(\vec{v}) \rangle + \langle \vec{l_t},  \nabla_{\vec{v}}( dN(\vec{v})) \rangle \label{eq:pre}
\end{align}
\end{subequations}

We now unpack  $\langle \vec{l_t},  \nabla_{\vec{v}} dN(\vec{v}) \rangle$ which requires a technical computation using parallel transport and tensor algebra.
 
\subsection{Covariant Derivative of the Shape Operator $dN$}

We expand the RHS term of (\ref{eq:pre}).  This will also expand into several terms; we will analyze each separately.  Although $dN(v)$ is an unknown vector, we want to understand it's covariant derivative $\nabla_{\vec{u}} (dN(\vec{v}))$ as a function of surface changes $\nabla_{\vec{u}}(dN)$ and isophote changes $\nabla_{\vec{u}} \vec{v}$.  We apply the chain rule to get:

\begin{align}
 \nabla_{\vec{v}} (dN(\vec{v})) = \left( \nabla_{\vec{v}}dN \right) (\vec{v}) + dN(\nabla_{\vec{v}} \vec{v})
 \label{eq:mid}
\end{align}

We now focus on the first term.

\subsubsection{Expansion of $dN$ in terms of the Hessian $H$}
Note that $dN$ is $(1, 1)$ tensor and thus we need to be careful when taking its covariant derivative.  Recall that the matrix representation of $dN$ is $I^{-1} II$ and that raising and lowering the tensor characteristic commutes with covariant differentiation.

\begin{subequations}
\begin{align}
\left( \nabla_{\vec{v}}dN \right) (\vec{v}) & = \left( \nabla_{\vec{v}}G^{-1} II \right) (\vec{v}) \\
& = \left(G^{-1} \nabla_{\vec{v}}II \right) (\vec{v}) \label{eq:16}
\end{align}
\end{subequations}

Write each of the normal components as $n_i$ so $\vec{N} = \{n_1, n_2, n_3 \}$.  Note that due to our Monge patch representation of $S(x, y)$, $\{\vec{f_{xx}}, \vec{f_{xy}}, \vec{f_{yy}}\}$ are nonzero only in their third component, e.g. $\vec{f_{xx}} = \{0, 0, g_{xx}\}$.
Recall the definition of the second fundamental form $II$:

\begin{subequations}
\begin{align}
II & = \begin{bmatrix} \vec{N} \cdot \vec{f_{xx}} & \vec{N} \cdot \vec{f_{xy}} \\  \vec{N} \cdot \vec{f_{xy}} & \vec{N} \cdot \vec{f_{yy}} \end{bmatrix}\\
& = \begin{bmatrix} n_3 g_{xx} & n_3 g_{xy} \\  n_3  g_{xy} &  n_3 g_{yy} \end{bmatrix}\\
& = n_3 \begin{bmatrix} g_{xx} & g_{xy} \\  g_{xy} & g_{yy} \end{bmatrix}
\end{align}
\end{subequations}

\noindent For notational convenience, we use the Hessian $H =  \begin{bmatrix} g_{xx} & g_{xy} \\  g_{xy} & g_{yy} \end{bmatrix}$.  Substituting into \ref{eq:16} and using the appropriate product rule:

\begin{subequations}
\begin{align}
\label{eq:nonzero_form_term}
\left( \nabla_{\vec{v}}dN \right) (\vec{v})  & = G^{-1}  \nabla_{\vec{v}} \left( n_3 H \right) (v)\\
\label{eq:mid_dN}
& = n_3 \, G^{-1}   \nabla_{\vec{v}}  \left(H \right) (v) + \vec{v} [ n_3] \, G^{-1} H  (v)
\end{align}
\end{subequations}

\noindent The second term of the above equation will be exactly zero (since $\vec{l_t} II \vec{v} = 0$) after the dot product with $\vec{l_t}$ in \ref{eq:pre}.  Thus, we only need calculate the first term and particularly the covariant derivative of $H$.

\subsubsection{Covariant Differentation of  $H$}
To covariant differentiate $H$, we note that $H$ is a $(0, 2)$ tensor and so we can expand it as a sum of tensor products of 1-forms.  We follow the notation in \cite{Dodson91}.  Write $H^1, H^2$ as the two rows of $H$ and $E_1, E_2$ as the standard basis 1-forms.  In a tensor representation, $H^1$ and $H^2$ are also both 1-form fields (covariant tensors).

\begin{subequations}
 \begin{align}
 H & = \begin{bmatrix} g_{xx} & g_{xy} \\ g_{xy} & g_{yy} \\ \end{bmatrix} \\
 & = \begin{bmatrix} g_{xx} & g_{xy} \\ \end{bmatrix} \otimes \begin{bmatrix} 1 & 0 \\ \end{bmatrix} + \begin{bmatrix} g_{xy} & g_{yy} \\ \end{bmatrix} \otimes \begin{bmatrix} 0 & 1 \\ \end{bmatrix}\\
 & = H^1 \otimes \vec{E}_1 + H^2 \otimes \vec{E}_2
 \end{align}
 \end{subequations}

\noindent To covariant differentiate $H$, we apply a product rule for tensor products:

\begin{align}
\label{eq:t_sum}
 \nabla_{u} H & = H^1 \otimes \nabla_v E_1 + H^2 \otimes \nabla_v E_2 \\
 			& + \nabla_v H^1 \otimes E_1 + \nabla_v H^2 \otimes E_2 \notag 
\end{align}
 
 \noindent Note that each of these 4 terms is also a $(0, 2)$ tensor and thus each term requires as input two vectors.  
Without loss of generality, we calculate one of the individual terms $ \nabla_v H^1 \otimes E_1$.  The rest are analogous.  
The covariant derivative of a covariant tensor requires actions on the tensor inputs, so we introduce dummy vectors $w_1, w_2$ to use in our expression.  

\subsubsection{Parallel Transport of $H$}
\label{sec:transport}

\vspace{5mm}

\begin{remark}
  As mentioned in Remark 1, we recall the second, equivalent definition of covariant differentiation here.  We define it intrinsically, that is, independent of the ambient space $\mathbb{R}^3$.  We will not go into the derivations regarding \emph{connections} or Christoffel symbols, which can be found in \cite{Dodson91} and \cite{docarmo}.  We just summarize that \emph{parallel transport} is a way to ``equate" nearby vectors in nearby tangent planes along a curve $\beta(s)$.  Using notation as in \cite{Dodson91}, we will write the parallel transport  in the forward direction of the vector field $\vec{w}(\beta(s))$ as $\tau_{s}^{\rightarrow} (\vec{w}(\beta(s)))$.  Conversely, the parallel transports backwards along the curve is written $\tau_{s}^{\leftarrow} (\vec{w}(\beta(s)))$.  Then, the covariant derivative can be defined intrinsically as:

$$\nabla_{\beta ' (0)} \vec{w} = \lim_{s \rightarrow 0} \left( \frac{\tau_{s}^{\leftarrow}(\vec{w}(\beta(s)) - \vec{w}(\beta(0))}{s} \right)$$

Thus, covariant differentiation resolves the tangent plane orientation problem by first transporting the vector $\vec{w}(\beta(s)) \in T_{\beta(s)} (S)$ back to a  ``parallel" vector in $T_{\beta(0)} (S)$  before doing the standard derivative subtraction.
\end{remark}
\vspace{5mm}

Now, due to the duality between 1-forms and vectors, when we apply covariant differentiation to a 1-form, we parallel transport the vector it acts on forwards.  (This is opposite to a covariant derivative of  a vector which is parallel transported backwards in the derivative.)  Define $\beta(s)$ as a curve passing through $P$ with velocity $\vec{v}$.  Note that the 1-form $H^1$ and its input vectors $w_1$ are defined along $\beta(s)$ and may be indexed at different positions.  We will denote the position of $H^1$ using a subscript, such as $H^1_{\beta(s)}$.  Using the definition of covariant derivative for covariant tensors in \cite{Dodson91}:

\begin{subequations}
\begin{align}
(\nabla_{\vec{v}} H^1 \otimes E_1)(\vec{w_1}, \vec{w_2}) & = E_1(\vec{w_2}) \smul \left( \lim_{s \rightarrow 0} \frac{ H^1_{\beta(s)} (\tau_{s}^{\rightarrow} (\vec{w_1} (\beta(0)))) - H^1_{\beta(0)} (\vec{w_1}(\beta(0))) }{s}\right) \label{eq:lim_a} \\
 & = E_1(\vec{w_2}) \smul \left( H^1_{\beta(0)} \left( \lim_{s \rightarrow 0} \frac{(\tau_{s}^{\rightarrow} (\vec{w_1} (\beta(0)))) - (\vec{w_1}(\beta(0))) }{s}\right)\right) \label{eq:lim_b}\\
 & \quad  - E_1(\vec{w_2}) \smul \left( \vec{v} [ H^1] \vec{w_1}  \right) \\
& = E_1(\vec{w_2}) \smul H^1\left( \vec{T_{w_1}} \right)  - E_1(\vec{w_2}) \smul \left( \vec{v} [ H^1] \vec{w_1}  \right)
\end{align}
\end{subequations}

\noindent The dot represents multiplication and for simplicity of notation, we assigned the change in parallel transport of an arbitrary vector $w_1$ to be $\vec{T_{w_1}}$: 
$$\left( \lim_{s \rightarrow 0} \frac{(\tau_{s}^{\rightarrow} (\vec{w_1} (\beta(0)))) - (\vec{w_1}(\beta(0))) }{s}\right) = \vec{T_{w_1}}$$

\noindent $\vec{T_{w_1}}$ can be written using the \emph{Christoffel symbols} but this will lead to unnecessary notation as the terms involving $\vec{T_{w_1}}$  will eventually cancel.
To go from equations \ref{eq:lim_a} to \ref{eq:lim_b}, we used a substitution: 

$$ - \vec{v} [ H^1] (\vec{w_1}) =  \lim_{s \rightarrow 0} \left( \frac{- H^1_{\beta(s)} (\tau_{s}^{\rightarrow} (\vec{w_1} (\beta(0)))) + H^1_{\beta(0)} (\tau_{s}^{\rightarrow} (\vec{w_1}(\beta(0))))}{s} \right) $$ 

We now repeat the process for the other three terms in \ref{eq:t_sum} and compile the four terms into the original matrix representation to get:


\begin{subequations}
\begin{align}
\label{eq:CD_A}
 \left( \nabla_{v} H \right) (\vec{w_1}, \vec{w_2}) & = - \vec{T_{w_2}} H \vec{w_1} \\
 & {\quad} - \vec{w_2}^T H \vec{T_{w_1}} 
  + (\vec{w_2} \vec{v} [ H] \vec{w_1}) 
\end{align}
\end{subequations}

\subsection{Covariant differentiation of $dN$ from covariant differentiation of $H$}
We have calculated the covariant derivative of the matrix $H$ for arbitrary tangent vectors.  Now, we substitute that in to finish the expansion of the initial equation \ref{eq:pre}.
We apply the above equation  \ref{eq:CD_A} to the equation \ref{eq:mid_dN} to get:

\begin{subequations}
\begin{align}
\langle \vec{l_t}, \left( \nabla_{\vec{v}}dN \right) (\vec{v}) \rangle  & = \Big\langle \vec{l_t}, n_3 G^{-1}   \nabla_{\vec{v}}  \left( \begin{bmatrix} g_{xx} & g_{xy} \\  g_{xy} & g_{yy} \end{bmatrix} \right) (v) + \vec{v} [ n_3] G^{-1} \begin{bmatrix} g_{xx} & g_{xy} \\  g_{xy} & g_{yy} \end{bmatrix}  (v) \Big\rangle \\
& = n_3 (\nabla_v H)(\vec{v}, \vec{l_t}) + \vec{v}[n_3] H (\vec{v}, \vec{l_t}) \\
& = n_3 (\nabla_v H)(\vec{v}, \vec{l_t}) \\
\label{eq:CS_v}
& = - \vec{T_{l_t}} II \vec{v} - \vec{l_t}^T II \vec{T_{v}}  + n_3  \vec{l_t}^T (\vec{v} [H ] \vec{v}) \\ 
& =  - \vec{l_t}^T II \vec{T_{v}}  + n_3  \vec{l_t}^T  (\vec{v} [H ] \vec{v})
\end{align}
\end{subequations}

\noindent where the first term of \ref{eq:CS_v} is proportional to $\vec{l_t} II v$, which is 0 by Proposition 3.2. Now, $n_3 \vec{l_t}^T$ = $n_3 \vec{l_t}^T H H^{-1} = \vec{l_t}^T II H^{-1} = (\nabla I) \cdot H^{-1}$, so 

$$\langle \vec{l_t}, \left( \nabla_{\vec{v}}dN \right) (\vec{v}) \rangle   = - \vec{l_t}^T II \vec{T_{v}} +  (\nabla I) \cdot H^{-1} \vec{v} [H ] \vec{v}$$

\subsection{Putting it all together}
Substituting the above equation into \ref{eq:mid}:

\begin{subequations}
\begin{align}
 \nabla_{\vec{v}} (dN(\vec{v})) & = \langle \vec{l_t}, \left( \nabla_{\vec{v}}dN \right) (\vec{v}) + dN(\nabla_{\vec{v}} \vec{v})
\rangle \\
& = - \vec{l_t}^T II \vec{T_{v}} -  (\nabla I) \cdot H^{-1} \vec{v} [H ] \vec{v} + \langle l_t, dN(\nabla_{\vec{v}} \vec{v}) \rangle  \\
& = -  \vec{l_t}^T II \vec{T_{v}} -  (\nabla I) \cdot H^{-1} \vec{v} [H ] \vec{v} + \langle l_t, dN(v'(s)  - T_{v})) \rangle \\
& =   -  (\nabla I) \cdot H^{-1} \vec{v} [H ] \vec{v}+ (\vec{l_t} II) \cdot v'(s) \\
& =   - (\nabla I) \cdot H^{-1} \vec{v} [H ] \vec{v} -  \nabla I \cdot v'(s)\\
& =   - (\nabla I) \cdot H^{-1} \vec{v} [H ] \vec{v} - I_{vv}
\label{eq:end_Ivv}
\end{align}
\end{subequations}

\noindent where we have used the fact that a covariant derivative is the sum of the changes  due to parallel transport and the coordinate changes  $v'(s)$ along the curve $\beta(s)$. 

Plugging into equation \ref{eq:pre} and rearranging, we get:

$$I_{vv} =  - (\vec{L} \cdot \vec{N}) || dN(v) ||^2 +  \nabla I H^{-1} (\vec{v} [H ] \vec{v})$$

\section{Shading Equations}

We have now computed the covariant derivative of the vector $\vec{v}$ in the direction $\vec{v}$. 
For an arbitrary point $p$, let $\vec{u}$ be the image vector in the direction of the brightness gradient.  Define $\vec{u}$ to be of unit length.  Then, we can repeat this calculation for the covariant differentiation of $\vec{v}$ in the direction $\vec{u}$.  In addition, we can calculate the covariant derivative of the vector $\vec{u}$ in the direction $\vec{u}$.  (Both of these proofs are similar to the one above and are left to the Appendix \ref{App:AppendixA}.)  This gives us a total of three equations equating the second order intensity information (as represented in vector derivative form) directly to surface properties.  

\vspace{5mm}

\begin{theorem}
For any point $p$ in the image plane, let $\{ \vec{u} , \vec{v} \}$ be the local image basis defined by the brightness gradient and isophote.  Let $I$ be the intensity, $\nabla I$ be the brightness gradient, $f(x, y)$ be the height function, $H$ be the Hessian, and $dN$ be the shape operator.  Then, the following equations hold regardless of the light source direction:

\begin{equation}
I_{vv} =  - I || dN(\vec{v}) ||^2 +  (\nabla I) \cdot H^{-1} (\vec{v} [H ] \vec{v})
\label{eq:shd-theorem1}
\end{equation}

\begin{equation}
I_{uu} =   - I || dN(\vec{u}) ||^2 - 2 \frac{|| \nabla I ||}{\sqrt{1 + || \nabla f ||^2}} \langle \nabla f, dN(\vec{u}) \rangle +  (\nabla I) \cdot  H^{-1} \cdot (\vec{u} [H ] \vec{u})
\label{eq:shd-theorem2}
\end{equation}
\end{theorem}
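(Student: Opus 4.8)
The plan is to observe that the first identity, \eqref{eq:shd-theorem1}, is nothing other than the conclusion of the computation carried out in the preceding subsections -- it is precisely the displayed $I_{vv}$ equation obtained at the end of ``Putting it all together'' -- so there is nothing further to prove for it. All the new content is in \eqref{eq:shd-theorem2}, and my approach is to re-run that same computation verbatim with the brightness-gradient direction $\vec u$ in place of the isophote direction $\vec v$, tracking the single structural change that this induces.

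Concretely, I would start from Proposition~3.1: $I_u = \nabla I \cdot \vec u = \langle \vec{l_t}, dN(\vec u)\rangle$. The crucial -- and essentially the only -- difference from the $\vec v$-case is that this quantity is \emph{not} zero: since $\vec u$ is the unit vector in the direction of $\nabla I$, it equals $\|\nabla I\|$, whereas $I_v \equiv 0$ by Proposition~3.2. Differentiating $I_u = \langle \vec{l_t}, dN(\vec u)\rangle$ in the direction $\vec u$ and using Lemma~3.3 (so that $\nabla_{\vec u}\vec{l_t} = -I\,dN(\vec u)$) produces the exact analogue of \eqref{eq:pre}, namely $I_{uu} = -I\,\|dN(\vec u)\|^2 + \langle \vec{l_t}, \nabla_{\vec u}(dN(\vec u))\rangle$. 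I would then expand $\langle \vec{l_t}, \nabla_{\vec u}(dN(\vec u))\rangle$ through the same chain of steps used for $\vec v$: the chain rule \eqref{eq:mid} splitting it into $(\nabla_{\vec u}dN)(\vec u)$ and $dN(\nabla_{\vec u}\vec u)$; the substitution $dN = n_3\,G^{-1}H$ with the product rule \eqref{eq:mid_dN}; the covariant-derivative-of-$H$ formula \eqref{eq:CD_A}; and the splitting of $\nabla_{\vec u}\vec u$ into its parallel-transport increment and its coordinate-acceleration part $u'(s)$, the latter reproducing $(\nabla I)\cdot H^{-1}(\vec u[H]\vec u)$ together with the $-I_{uu}$ that moves to the left-hand side, just as in \eqref{eq:end_Ivv}.

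At each stage, the places where the $\vec v$-computation discarded a term by invoking the isophote condition $\vec{l_t}^{\,T} II\,\vec v = 0$ (Proposition~3.2) now instead produce a term carrying the nonzero factor $\vec{l_t}^{\,T} II\,\vec u = \|\nabla I\|$, so those terms survive. The plan is to collect all such survivors -- in particular the $\vec u[n_3]\,G^{-1}H(\vec u)$ piece of \eqref{eq:mid_dN} -- and to evaluate the directional derivative of the vertical component of the unit normal. Since $n_3 = \langle \vec N, \vec e_3\rangle$ and the vertical component of the surface tangent vector lifting an image vector $\vec w$ equals $\langle \nabla f, \vec w\rangle$ for the Monge patch $S = \{x,y,f(x,y)\}$, the quantity $\vec u[n_3]$ is expressible through $\langle \nabla f, dN(\vec u)\rangle$; assembling the surviving terms then yields precisely the extra middle term $-2\,\|\nabla I\|/\sqrt{1+\|\nabla f\|^2}\;\langle \nabla f, dN(\vec u)\rangle$ of \eqref{eq:shd-theorem2}. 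Throughout, the light source $\vec L$ enters only through $I = \vec L \cdot \vec N$ -- Lemma~3.3 has absorbed every other occurrence -- so the resulting identity is manifestly independent of the light-source direction, as claimed.

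I expect the main obstacle to be exactly this last accounting: confirming that the newly nonvanishing terms combine with the correct signs, the exact factor $-2$, and the correct metric normalization $1/\sqrt{1+\|\nabla f\|^2}$. This is delicate because it depends on handling the $n_3$-derivative and the residual parallel-transport contribution consistently in the chosen tangent-plane basis -- the same subtleties that already made the $I_{vv}$ derivation lengthy. Once those are pinned down, the rest is a mechanical repetition of the $I_{vv}$ case, and the $I_{uv}$ identity relegated to the appendix follows by the same template with one factor $\vec v$ and one factor $\vec u$.
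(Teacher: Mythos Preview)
Your proposal is correct and mirrors the paper's own argument: the $I_{vv}$ identity is exactly the end result of the ``Putting it all together'' computation, and the $I_{uu}$ identity is obtained by re-running that computation with $\vec u$ in place of $\vec v$, tracking the terms that previously vanished via $\vec{l_t}^{\,T} II\,\vec v = 0$. The paper (in its appendix) pinpoints precisely the two survivors you anticipate --- the $\vec u[n_3]\,G^{-1}H(\vec u)$ piece from \eqref{eq:mid_dN} and the parallel-transport term $\vec{T_{l_t}}\,II\,\vec u$ from \eqref{eq:CS_v} --- and shows that each evaluates to $-\|\nabla I\|\langle \nabla f, dN(\vec u)\rangle/\sqrt{1+\|\nabla f\|^2}$, which is where the factor $-2$ comes from.
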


\begin{equation}
I_{uv} =  - I \langle dN(\vec{v}), dN(\vec{u}) \rangle - \frac{|| \nabla I ||}{\sqrt{1 + || \nabla f ||^2}} \langle \nabla f, dN(\vec{v}) \rangle +   (\nabla I) \cdot H^{-1} \cdot (\vec{u} [H ] \vec{v})
\label{eq:shd-theorem3}
\end{equation}

These equations are novel; we call them \emph{the 2nd-order shading equations}.  Note that there is no dependence on the light source; thus, these equations directly restrict the derivatives of our local surface patch.  We have included in the appendix the expanded versions of these equations in terms of those derivatives.

\section{Applications of the shading equations}

Below, we illustrate applications of these second order shading equations.  We consider four applications, arranged from simple to complicated :
\begin{enumerate}
\item In the 1D case, these equations can be solved directly in a partial differential equation (P.D.E.) formulation to recover the curve exactly.
\item A second order surface assumption, with a fronto-parallel tangent plane, leads to four explicit solutions.
\item A second order surface assumption with arbitrary tangent planes can be solved implicitly.
\item ``Critical" points and curves in the image have reduced shading ambiguity in the general case.
\end{enumerate}

The last case, which is much more complicated will be treated more fully in a companion paper (in prep).

\subsection{1D Shape from Shading}

\begin{figure}[t]
\begin{center}
\includegraphics[width= 0.8 \linewidth]{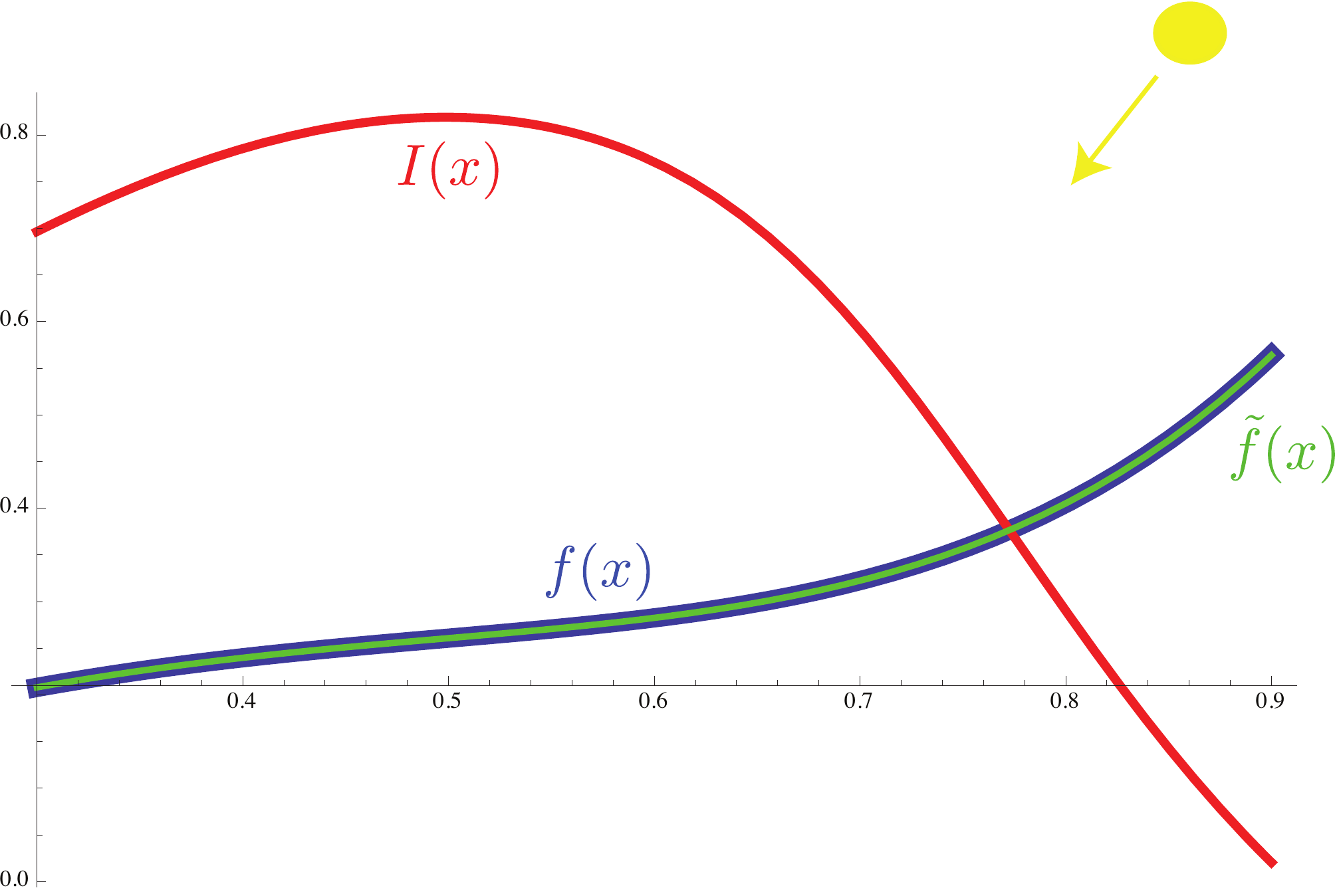}
\caption{The shape-from-shading flow problem in one dimension. The blue curve is the 1D surfaces which has the equation $\sin(x)?x^2+x^5$ . The red curve is the intensity function for the 1D setup, given a a light source of norm 1 and direction as shown. The green curve (overlayed) is the recovered surface f(x) using our P.D.E formulation 5.1 given three boundary conditions. As
you can see, the reconstruction is indistinguishable from the correct surface.}
\label{fig:1D_example}
\end{center}
\end{figure}

For simplification and to build intuition, we consider the problem of shape from shading in 1D:  Given a one dimensional intensity function $I(x)$, solve for the smooth curve $f(x)$ corresponding to $I(x)$ under Lambertian shading.  Although this problem can be solved with other means, we use it to illustrate the P.D.E approach to solving these shading equations.   In this example, we build the intensity function using an unknown light source and recover the shape exactly using our second order shading equations.   We treat our equation as a partial differential equation and solve it numerically. See Figure \ref{fig:1D_example} for the problem setup.

\subsubsection{P.D.E Formulation}
On the curve $f(x)$, the point sets of constant brightness are now single points, rather than isophote curves.  Thus, we cannot talk about $I_{vv}$ and $I_{uv}$.  However, $I_{uu}$ still makes sense, as $\vec{u}$ is now defined as the tangent $f_x$ to the curve $f(x)$.  Thus, we can apply equation  \ref{eq:shd-theorem2}.  

We need to convert the surface geometric properties in equation \ref{eq:shd-theorem2} into their simpler 1D analogs.  For example, the Hessian $H$ becomes $f_{xx}$, the directional derivative $\vec{u} [ H ] $ becomes $f_{xxx}$ and $dN(\vec{u})$ becomes the change in the normal as we move along the tangent direction:

$$dN(\vec{u}) = \frac{f_{xx}}{1+ {(f_x)}^2}$$

\noindent Putting it all together, we get the simplified shading equation in 1D:
\begin{corollary}
\begin{align}
I_{xx} = - I \frac{f_{xx}^2}{(1 + a^2)^2} - 2 I_x \frac{a f_{xx}}{(1 + a^2)} + I_x \frac{f_{xxx}}{f_{xx}}
\label{eq:1D_case}
\end{align}
\end{corollary}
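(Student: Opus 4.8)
The plan is to read off the corollary as nothing more than the one-dimensional specialization of equation~\eqref{eq:shd-theorem2}, the $I_{uu}$ equation of the Theorem, since in the 1D problem the isophotes degenerate to points and only the brightness-gradient direction survives (so $I_{vv}$ and $I_{uv}$ make no sense, but $I_{uu}$ does). First I would set up the dictionary between the general geometric objects and their 1D analogues. The ``image'' is a function $I(x)$, so its gradient lies along the single coordinate axis and the unit brightness-gradient vector is $\vec{u} = \vec{e_1}$; hence $I_{uu} = I_{xx}$. The surface is the graph curve $(x, f(x))$, so the Hessian $H$ collapses to the scalar $f_{xx}$ (hence $H^{-1} = 1/f_{xx}$), the directional derivative $\vec{u}[H]$ becomes $f_{xxx}$, the gradient $\nabla f$ becomes $a := f_x$ with $\|\nabla f\|^2 = a^2$, and $\nabla I$ becomes $I_x$ with $\|\nabla I\| = |I_x|$, the sign being fixed once $\vec{u}$ is oriented to agree with increasing brightness.

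The only step with genuine geometric content is establishing the 1D form of the shape operator, namely $dN(\vec{u}) = f_{xx}/(1+f_x^2)$ as quoted in the text just before the corollary. I would derive this from the paper's own expansion $dN = G^{-1}II$: the Gauss map of the curve is $N(x) = (-f_x, 1)/\sqrt{1+f_x^2}$, the first fundamental form is the scalar $G = 1+f_x^2$, and the second fundamental form reduces to $N \cdot (0, f_{xx}) = f_{xx}/\sqrt{1+f_x^2}$. Because $\vec{u}$ has unit length in the image, its push-forward $\hat{u} = T \circ df(\vec{u})$ to the surface tangent line carries an extra factor $\sqrt{1+f_x^2}$, and collecting the three pieces gives $dN(\vec{u}) = f_{xx}/(1+f_x^2)$.

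With the dictionary in place, the proof is a term-by-term substitution into~\eqref{eq:shd-theorem2}. The first term becomes $-I\,\|dN(\vec{u})\|^2 = -I\, f_{xx}^2/(1+a^2)^2$; the last term becomes $(\nabla I)\cdot H^{-1}(\vec{u}[H]\vec{u}) = I_x\, f_{xxx}/f_{xx}$; and the middle ``tilt'' term $-2\,\|\nabla I\|\,\langle \nabla f, dN(\vec{u})\rangle/\sqrt{1+\|\nabla f\|^2}$ collapses, after inserting the scalar 1D values, to $-2\, I_x\, a\, f_{xx}/(1+a^2)$. Adding the three contributions yields the displayed equation~\eqref{eq:1D_case}.

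I expect the main obstacle to be the bookkeeping of the factor $\sqrt{1+f_x^2}$, which enters in several separate places — in $G$, in $II$, in the image-to-surface length conversion that defines $\hat{u}$, and in the prefactor $\|\nabla I\|/\sqrt{1+\|\nabla f\|^2}$ of the middle term — so that the powers of $(1+a^2)$ must be tracked with care for the three terms to emerge with exponents $2$, $1$, and $0$ respectively. A secondary subtlety is fixing the correct convention for the pairing $\langle \nabla f, dN(\vec{u})\rangle$ in one dimension (both factors treated as scalars, consistently with the induced metric), together with the sign of $\|\nabla I\|$ inherited from the orientation chosen for $\vec{u}$; with those conventions pinned down the reduction is otherwise purely mechanical.
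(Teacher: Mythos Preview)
Your proposal is correct and follows essentially the same approach as the paper: specialize the $I_{uu}$ equation~\eqref{eq:shd-theorem2} to one dimension via the dictionary $H\mapsto f_{xx}$, $\vec{u}[H]\mapsto f_{xxx}$, $dN(\vec{u})\mapsto f_{xx}/(1+f_x^2)$, and substitute term by term. If anything, you are more careful than the paper, which simply states the 1D form of $dN(\vec{u})$ and the other replacements without tracking the metric factors or the inner-product convention in $\langle\nabla f, dN(\vec{u})\rangle$.
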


Now, the functions $\{I, I_x, I_{xx} \}$ are all known from the intensity curve, so equation \ref{eq:1D_case} is a third order differential equation.  Solving this with the appropriate boundary conditions will give us our curve $f(x)$.  Note that these equations are ill-defined when $f_{xx} = 0$, so one must be careful and use approximations near these critical points.

\subsubsection{Boundary Conditions}

As equation \ref{eq:1D_case} is a third order equation, it needs three boundary conditions.   We assume known Dirichlet boundary conditions on both endpoints, with a single von Neumann condition at one of the endpoints.  With these boundary conditions, we use Mathematica's NDSolve function to get the solution curve $\tilde{f}(x)$.  See Figure 
\ref{fig:1D_example}.
Although this is a toy example, it illustrates the precision that may be available in the 2D shape from shading case if we can solve the shading equations in a P.D.E. formulation.

\subsubsection{Extension to 2D Shading}

If a surface satisfies the equations \ref{eq:shd-theorem1}, \ref{eq:shd-theorem2}, and \ref{eq:shd-theorem3} at every point in the image, then that surface will be a possible solution to shape from shading.   That is, when imaged under some set of light source position(s), it will result in an identical image.  Thus, these equations implicitly define all smooth surfaces that can satisfy a shaded image.  However, in order to solve P.D.Es, one must have boundary conditions, and it is unclear in the 2D case exactly what these boundary conditions should be.  In addition, we have no guarantee that there will be a unique solution as these are non-linear equations and thus don't satisfy the standard P.D.E uniqueness theorem.  In fact, we know that many different global surfaces can lead to the exact same image. Thus, we will consider some solutions under simplifications (equivalently, assumptions on our surface).   For completeness, we write the shading equations in the standard PDE fashion in Appendix \ref{App:AppendixB}.

\subsection{Second Order Assumption and Frontal-Parallel}
\label{sec:2nd_order_FP}

The shading equations are quite complicated, nonlinear, and of third order.  Although it may be possible to directly solve them as a partial differential equation system, we will initially look at surfaces where the equations reduce nicely.  (See also \cite{Kunsberg12}.)  Consider a second order Monge patch: $S = \{x, y, f(x, y) \}$ with $f(x, y) = a x + b y + c x^2 + d x y  + e y^2$.  Since there are no third order terms, the directional derivative of the Hessian will be 0 and the equations simplify to:

\begin{corollary}
\begin{subequations}
\begin{align}
I_{vv} & =  - (I) || dN(\vec{v}) ||^2 \label{eq:shd_2nd_1}\\ 
I_{uu} & =   - (I) || dN(\vec{u}) ||^2 -  2 \frac{|| \nabla I ||}{\sqrt{1 + || \nabla f ||^2}} \langle \nabla f, dN(\vec{u}) \rangle  \label{eq:shd_2nd_2}\\
I_{uv} & =  - (I) \langle dN(\vec{v}), dN(\vec{u}) \rangle -  \frac{|| \nabla I ||}{\sqrt{1 + || \nabla f ||^2}} \langle \nabla f, dN(\vec{v}) \rangle  \label{eq:shd_2nd_3}
\end{align}
\end{subequations}
\end{corollary}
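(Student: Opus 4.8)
The plan is to show that each of the three shading equations of the Theorem degenerates to the stated form, the point being that the only terms that disappear are exactly those carrying a directional derivative of the Hessian, and these vanish identically once the height function has no cubic part.

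First I would write down the Hessian of $f(x,y) = ax+by+cx^2+dxy+ey^2$: its entries are $H_{11}=2c$, $H_{12}=H_{21}=d$, $H_{22}=2e$, all constants independent of $(x,y)$. Hence for every image tangent direction $\vec{w}$ the entrywise directional derivative $\vec{w}[H]$ is the zero matrix, and in particular the three quantities $\vec{v}[H]\vec{v}$, $\vec{u}[H]\vec{u}$ and $\vec{u}[H]\vec{v}$ occurring in \eqref{eq:shd-theorem1}, \eqref{eq:shd-theorem2} and \eqref{eq:shd-theorem3} all vanish. Therefore each term of the form $(\nabla I)\cdot H^{-1}(\,\cdot\,)$ in those equations is zero.

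Then I would substitute these vanishings back into the Theorem. Equation \eqref{eq:shd-theorem1} loses its last term and becomes \eqref{eq:shd_2nd_1}; \eqref{eq:shd-theorem2} and \eqref{eq:shd-theorem3} likewise lose their final $(\nabla I)\cdot H^{-1}(\,\cdot\,)$ terms while the curvature terms $-I\,\|dN(\cdot)\|^2$ and the gradient terms $\langle\nabla f, dN(\cdot)\rangle$ are left unchanged, yielding \eqref{eq:shd_2nd_2} and \eqref{eq:shd_2nd_3}. Nothing else in the earlier derivation used the third-order Taylor coefficients $c_6,\dots,c_9$, so no other step needs to be redone; the $\vec{u}$-direction analogues proved in Appendix \ref{App:AppendixA} are reduced in exactly the same way.

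The only subtlety -- the nearest thing to an obstacle -- is the meaning of $H^{-1}$ in the discarded terms when $\det H = 4ce-d^2 = 0$. Here I would trace the derivation of \eqref{eq:shd-theorem1} back to the step where $n_3\vec{l_t}^T$ is rewritten as $\vec{l_t}^T II\, H^{-1}$ and the factor $\vec{v}[H]\vec{v}$ is introduced, and observe that the term ultimately being dropped is literally proportional to $\vec{v}[H]$; since $\vec{v}[H]=0$ it vanishes cleanly, so the reduction is valid even in the degenerate (parabolic) case. If desired I would also record the further fronto-parallel specialization $a=b=0$, i.e. $\nabla f = 0$, in which the surviving $\langle\nabla f, dN(\cdot)\rangle$ terms vanish as well and only the pure second-fundamental-form relations remain -- the form used to extract the four explicit solutions in this section.
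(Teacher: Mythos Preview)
Your proposal is correct and follows exactly the paper's own reasoning: the paper's entire justification is the single sentence ``Since there are no third order terms, the directional derivative of the Hessian will be 0 and the equations simplify,'' and you have spelled out precisely that observation with the explicit constant-entry Hessian. Your additional remarks on the parabolic case $\det H = 0$ and on the subsequent fronto-parallel reduction are sound and go slightly beyond what the paper records, but they do not constitute a different approach.
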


Specifying further, let the tangent plane be frontal-parallel, i.e where the normal to the tangent plane is parallel to the view vector.  In this case, at the origin, vectors on the image plane are only translations of vectors on the tangent plane (in contrast to the general case, where there may be rotations, dilations, etc.)  The first fundamental form $G$ is now the identity matrix.  Thus, the shape operator reduces: $dN  = G^{-1} II  = II = n_3 H = H$.  In addition, the gradient $\nabla f = 0$.  Thus, the second term on the R.H.S of each equation is equal to 0.

\begin{corollary}
\begin{subequations}
\begin{align}
\frac{I_{vv}}{I} & =  - \vec{v} H^2 \vec{v} \\ 
\frac{I_{uu}}{I} & =   - \vec{u} H^2 \vec{u}  \\
\frac{I_{uv}}{I} & =  -\vec{v} H^2 \vec{u}
\end{align}
\end{subequations}
\end{corollary}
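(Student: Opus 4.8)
The plan is to specialize the second-order shading equations of the previous corollary, namely (\ref{eq:shd_2nd_1})--(\ref{eq:shd_2nd_3}), to the frontal-parallel configuration and read off the claimed identities; no new machinery is needed. First I would record the two structural consequences of the frontal-parallel assumption at the chosen point $p$: because the tangent plane is orthogonal to the view vector, the first-order Taylor coefficients vanish, so $\nabla f = (a,b) = (0,0)$ at $p$, and the unit normal there is $\vec{N} = (0,0,1)$, i.e. $n_3 = 1$. Hence the first fundamental form is $G = \mathrm{Id}$, and the shape operator collapses: $dN = G^{-1} II = II = n_3 H = H$. Moreover, since in this configuration the image plane and the tangent plane $T_p(S)$ agree up to a translation, the lifted vectors coincide with the image vectors, $\hat{v} = \vec{v}$ and $\hat{u} = \vec{u}$.

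Next I would substitute these facts into each of the three equations. In (\ref{eq:shd_2nd_2}) and (\ref{eq:shd_2nd_3}) the middle term carries a factor $\langle \nabla f, dN(\cdot) \rangle$; since $\nabla f = 0$ at $p$, every such term is identically zero, and the three equations reduce to their leading terms $I_{vv} = -I\,\| dN(\vec{v}) \|^2$, $I_{uu} = -I\,\| dN(\vec{u}) \|^2$, and $I_{uv} = -I\,\langle dN(\vec{v}), dN(\vec{u}) \rangle$. With $dN = H$ a symmetric matrix acting on $\vec{v}$ and $\vec{u}$, expanding the norms and the inner product gives $\| dN(\vec{v}) \|^2 = (H\vec{v})^{T}(H\vec{v}) = \vec{v}^{T} H^2 \vec{v}$, and likewise $\| dN(\vec{u}) \|^2 = \vec{u}^{T} H^2 \vec{u}$ and $\langle dN(\vec{v}), dN(\vec{u}) \rangle = (H\vec{v})^{T}(H\vec{u}) = \vec{v}^{T} H^2 \vec{u}$. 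Dividing through by $I$ then yields the three stated equations.

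I do not expect a genuine obstacle here: the argument is entirely a substitution into equations already proved. The only points requiring a moment of care are (i) confirming that the frontal-parallel hypothesis forces \emph{both} $\nabla f = 0$ \emph{and} $n_3 = 1$ at $p$ simultaneously, so that the cross terms drop out cleanly and no residual factor of $n_3$ survives in front of $H$, and (ii) noting that it is precisely the symmetry $H^{T} = H$ that turns $H^{T} H$ into $H^2$, so that the quadratic forms appearing on the right-hand side are indeed $\vec{v}^{T} H^2 \vec{v}$ rather than $\vec{v}^{T} H^{T} H \vec{v}$ written in a less transparent form. A remark identifying these right-hand sides with $\|H\vec v\|^2$, $\|H\vec u\|^2$, and $\langle H\vec v, H\vec u\rangle$ makes the geometric content (the image Hessian in the $\{\vec u,\vec v\}$ frame is $-I$ times the Gram matrix of $H$ applied to that frame) explicit and sets up the solvability discussion that follows.
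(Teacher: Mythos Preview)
Your proposal is correct and follows essentially the same route as the paper: specialize the second-order equations (\ref{eq:shd_2nd_1})--(\ref{eq:shd_2nd_3}) to the frontal-parallel case, observe that $\nabla f = 0$ and $G = \mathrm{Id}$ force $dN = H$ and kill the middle terms, then expand the remaining norms and inner product. Your additional remarks on $n_3 = 1$ and on the symmetry $H^{T}H = H^{2}$ just make explicit steps the paper leaves to the reader.
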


  In this important special case, we get the elegant result that the normalized second derivatives of intensity are proportional to the square of surface second derivatives.  This has been hypothesized in other work \cite{pentland84}.  The two-fold ambiguity, defined by the transformation $H \rightarrow -H$ emerges naturally. This is the well-explored \emph{concave/convex} ambiguity.  But there is more.

Without loss of generality, we assume that the Hessian is expressed in the basis $\{\vec{v}, \vec{u}\}$.  Then, the equations simplify once more:

\begin{subequations}
\begin{align}
\frac{I_{vv}}{I} & =  - (f_{xx}^2 + f_{xy}^2)  \label{eq:2nd_fp_b_1} \\
\frac{I_{uu}}{I} & =   - (f_{yy}^2 +  f_{xy}^2)    \label{eq:2nd_fp_b_2} \\
\frac{I_{uv}}{I} & =  -(f_{xx} + f_{yy}) f_{xy}  \label{eq:2nd_fp_b_3}
\end{align}
\end{subequations}

If we plot these three equations in 3-space defined by coordinate axes of $\{f_{xx}, f_{xy}, f_{yy}\}$, we see that the equations consist of two cylinders (situated perpendicular to each other) and a hyperbolic cylinder, which has a set of four intersection points.    Thus, we have a second 2-fold ambiguity, which corresponds to a saddle/ellipsoid type of ambiguity.  See Figure \ref{fig:4fold_ambig_FP}.  This is essentially unstudied in the SFS literature (but see \cite{Erens93}).

\begin{figure}[t]
\begin{center}
\includegraphics[width= 1 \linewidth]{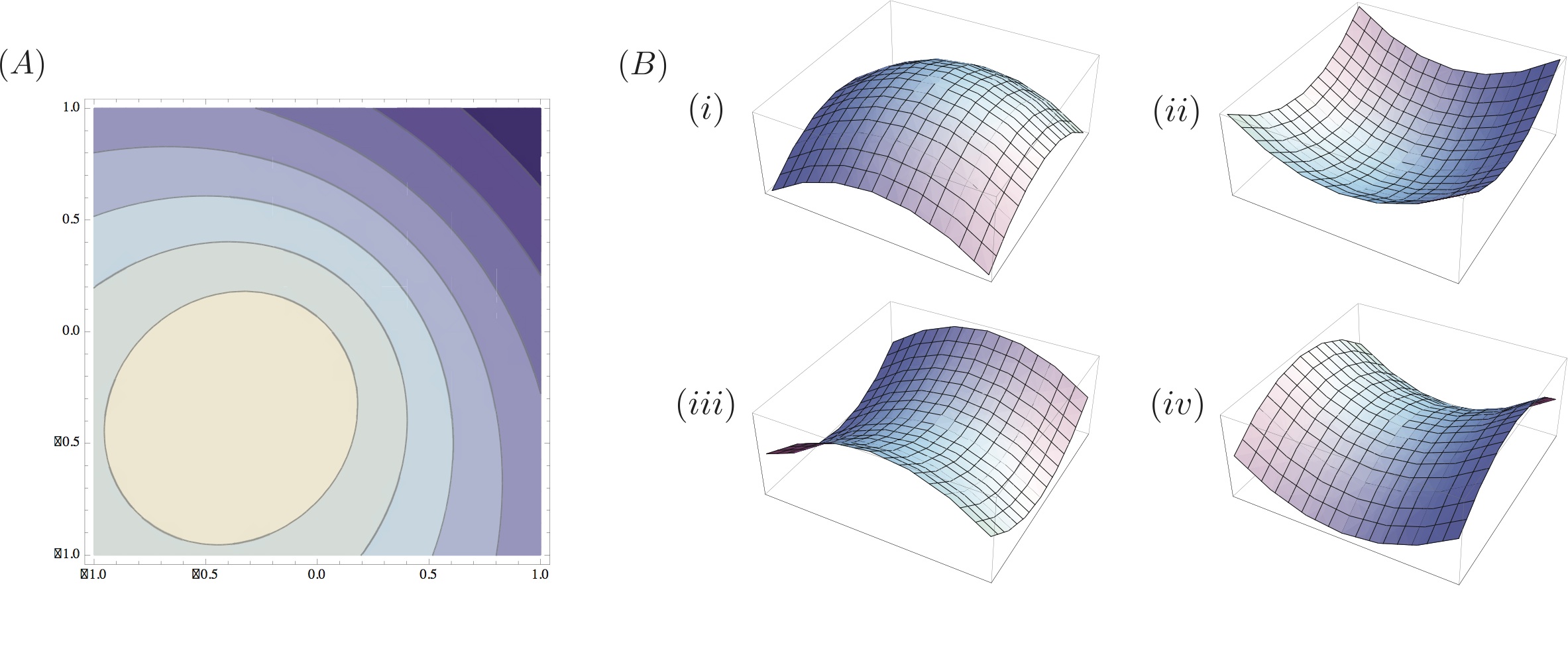}
\caption{For a frontal parallel second order Monge patch, we have exactly 4 surfaces available for each shading flow (A).  There is a concave/convex ambiguity in each row and a saddle/ellipsoid ambiguity in each column.}
\label{fig:4fold_ambig_FP}
\end{center}
\end{figure}


Note that we reduced the possible surfaces greatly by assuming the third order surface derivatives were 0 and that the tangent plane is frontal-parallel.  That is, we assumed knowledge of 4 parameters (on the third order derivatives) and 2 parameters (tangent plane).  In general, without these assumptions, the shading ambiguity space will be 6 dimensional.  In Section \ref{sec:2nd_order}, we will relax the condition of a frontal-parallel tangent plane.

\subsection{Second Order Assumption}
\label{sec:2nd_order}
 We turn back to the question posed in the first example, but with only the second order assumption on a surface patch.  That is, we again write $S = \{x, y, f(x, y) \}$ with $f(x, y) = a x + b y + c x^2 + d x y  + e y^2$.  This is a more complicated example than the previous two.  In this case, we work with the system  defined by the equations \ref{eq:shd_2nd_1},  \ref{eq:shd_2nd_2},  \ref{eq:shd_2nd_3}.

In the local second order patch, there are five parameters that need to be calculated: the two tangent plane orientation parameters and the three surface curvatures.  There are also an additional three parameters (two for the light source position and one for the albedo) that are involved in the image formation process.  If we consider the previous discussion, then we have 6 conditions $ \{ I, I_x, I_y, I_{xx}, I_{xy}, I_{yy} \}$ with 8 parameters.   Our equations factor out both the albedo and light source directions -- thus, we ignore $\{I, I_x, I_y\}$ and end up with 3 conditions $\{I_{xx}, I_{xy}, I_{yy} \}$ on 5 surface parameters.   It is reasonable to expect that we will have a two dimensional family of surfaces corresponding to each patch.  In addition, we expect that the family of surfaces may be parametrized by the light source position on the upper hemisphere or equivalently the two parameters of the tangent plane. 

This agrees with what we saw in Section \ref{sec:2nd_order_FP}.  Since we now have no assumption on the two parameters of the tangent plane, we must have a 2D ambiguity multiplied by any ambiguity we found in that section.

 We can apply the shading equations above to the Monge patch $f(x, y)$ to get three polynomial equations in $\{a, b, c, d, e\}$. We denote them as $g_i (c, d, e), i = 1, 2, 3$.  (For this analysis, we used \emph{Mathematica}). We will not display these polynomial equations here, as they are cumbersome but easy to replicate.  By choosing either the tangent plane coefficients (or equivalently the dominant light source direction), these 4th-order polynomial equations $\{g_i \}$ define the remaining coefficients $\{c, d, e\}$. 

For any chosen $\{a, b \}$ there may be either 0, 2, or 4 real roots of these polynomial equations.  Each root represents a corresponding second order Monge patch that would result in the same local shading flow.  Since the analysis is difficult, we have observed experimentally  that the polynomial system with a given shading flow and tangent plane choice will only have real roots outside a rectangle containing the origin.   That is, the region of tangent planes where there is no solutions is of the form $ \{ -x_0 \leq a \leq x_1 \cup -y_0 \leq b \leq y_1 \}$.   Of course, the rectangle's exact dimensions depend on the tangent plane choice.  Unfortunately, due to the complexity of this polynomial system, we are unable to state the exact relationship between the rectangle, the shading flow, and the choice of tangent plane.

However, given the shading flow and any single choice of the tangent plane $\{a, b\}$, we can solve the $\{g_i \}$ to calculate all the possible roots of the equation for that tangent plane choice, which is the necessary part for our goal of shape reconstruction. 
In Figure \ref{fig:2d_ambig}, we show an example of a shading flow that corresponds to each of the following surfaces, all with slightly different tangent planes.

\begin{figure}[t]
\begin{center}
\includegraphics[width= 1 \linewidth]{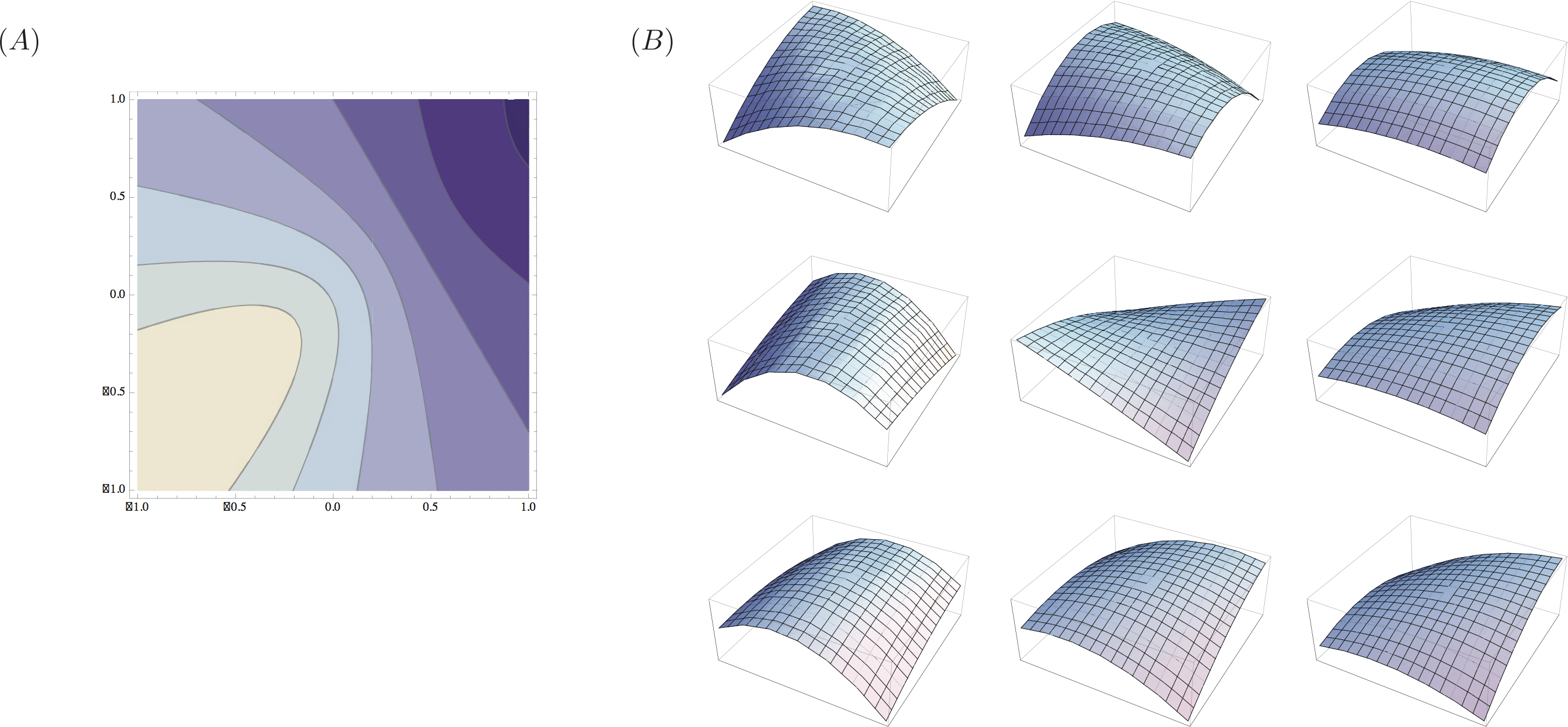}
\caption{Figure (A) shows a shading flow and Figure (B) shows 9 second order Monge patches, all with different tangent planes that are solutions to the polynomial equations and thus can result in the shading flow in Figure (A) when lit properly.}
\label{fig:2d_ambig}
\end{center}
\end{figure}

\subsection{Four-fold Ambiguity}

If there is to be a real solution to the polynomial system $\{g_i\}$ for a chosen $\{a_0, b_0 \}$, we must have either 2 or 4 solutions.  The reason for part of this ambiguity is due to the squared nature of the $||dN(\vec{v})||$ terms, just as we saw with the $H \rightarrow -H$ ambiguity in the frontal parallel case.  This leads to the standard concave/convex ambiguity pair.
The second pair of solutions (if it exists) is due to the saddle/spherical ambiguity as shown in Figure \ref{fig:4fold_ambig_FP} and Figure \ref{fig:4fold_ambig}.  We get one surface for each positive and negative pair of the principal curvatures: $\{k_1 > 0, k_2 > 0\},\{k_1 > 0, k_2 < 0\}, \{k_1 < 0, k_2 > 0\}, \{k_1 < 0, k_2 < 0\} $.  To see the complete ambiguity in movie form, please view the four GIFs in the Supplementary Information.  There is one GIF for each branch.

\begin{figure}[t]
\begin{center}
\includegraphics[width= 0.8 \linewidth]{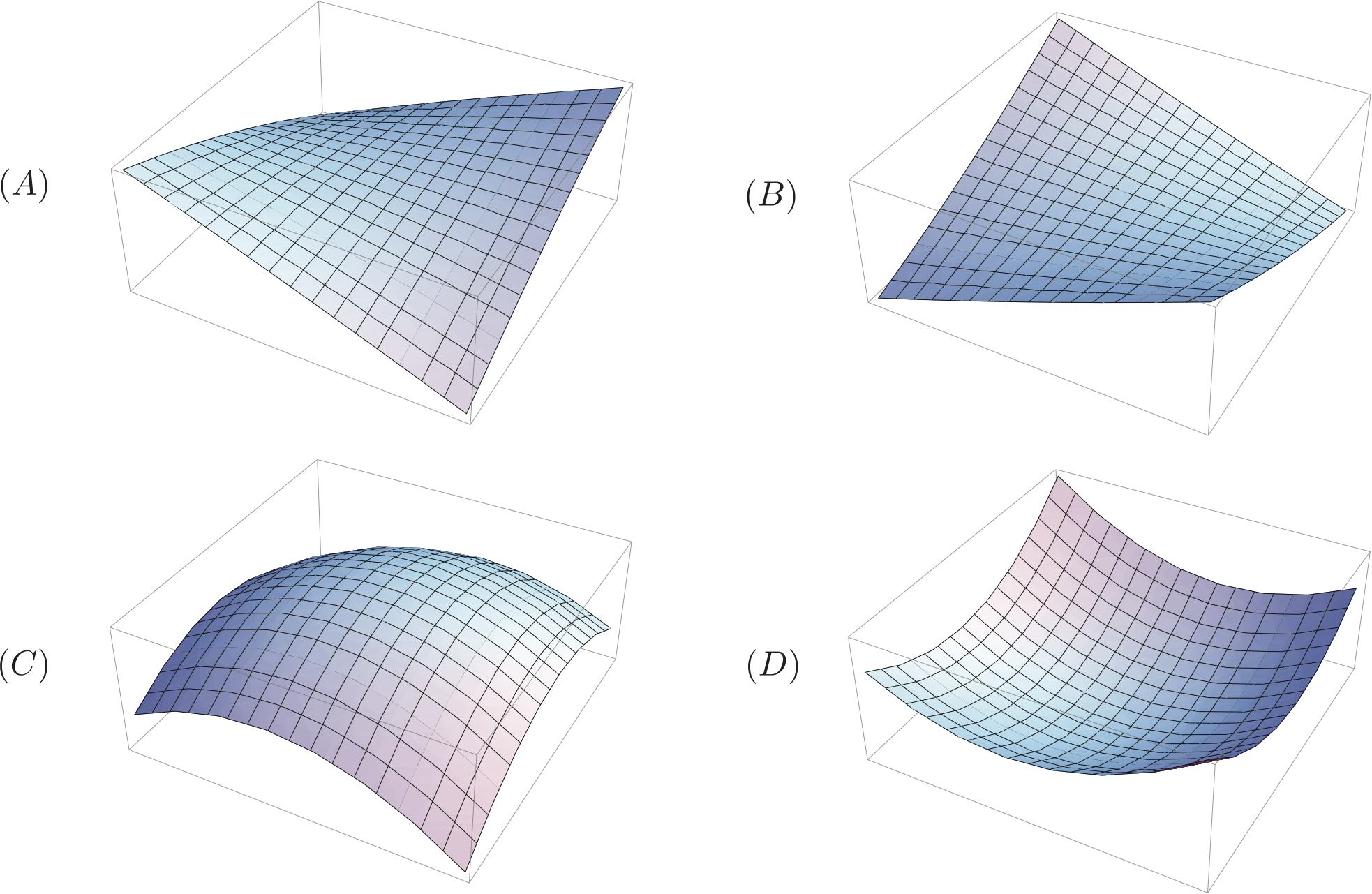}
\caption{Figure (A) is the surface corresponding to the frontal parallel tangent plane, as seen in the center of \ref{fig:2d_ambig} B).  In addition to the 2D ambiguity as illustrated in \ref{fig:2d_ambig} there can also be up to 4 different surfaces for a given tangent plane. Here,  (B), (C), (D), are the other surfaces corresponding to Figure (A).  Note that there are a pair of saddle surfaces and a pair of spherical surfaces.  The transformation taking (A) to (B) and (C) to (D) is the \emph{bas-relief} ambiguity.  In addition, there is a novel ambiguity between saddles and spherical shapes.}
\label{fig:4fold_ambig}
\end{center}
\end{figure}

To summarize, we can use these equations to recover all the second order surface information (up to the four-fold ambiguity) given any tangent plane orientation.  However, we do not have the tangent plane information \emph{a priori}.
Thus, even with the local second order assumption, we already must deal with at least a 2D shading ambiguity at every local patch.

For this reason, we believe the shape from shading problem can - and should- either be solved at certain points in the image (considered next) or should be combined with other means for obtaining tangent plane information.

\subsection{Ambiguity Reduction at Critical Points}

Much work has focused on the question: ``Where should one draw lines on a surface in order to give the best impression of the surface shape?"  Recently, Decarlo et al. \cite{decarlo03} have considered ``suggestive contours"  and Judd et al. have suggested apparent ridges \cite{Judd07}.  How do we decide which feature lines are ``better"? \cite{Cole09}  Why are certain curves so helpful in psychophysics?  We believe these questions can be answered by looking at the shading ambiguity on these feature lines. 

Consider the example of the highlight lines on a surface. We define them here as the points where the brightness gradient is a local maximum, i.e. $\nabla I = 0$.  For now, consider the generic case when the Gaussian curvature is not zero and so $H^{-1}$ is well-defined.  Then our equations \ref{eq:shd-theorem1},  \ref{eq:shd-theorem2} ,  \ref{eq:shd-theorem3} simplify:

\begin{subequations}
\begin{align}
I_{vv} & =  - (I) || dN(\vec{v}) ||^2 \label{eq:shd_cp_1}\\ 
I_{uu} & =   - (I) || dN(\vec{u}) ||^2 \label{eq:shd_cp_2}\\
I_{uv} & =  - (I) \langle dN(\vec{v}), dN(\vec{u}) \rangle  \label{eq:shd_cp_3}
\end{align}
\end{subequations}

This is quite similar to the second order, frontal parallel case, but we didn't need any assumptions!  Rather, we can only apply these equations to highlight points in the image.  This cursory analysis may explain why highlight lines are so effective at revealing surface shape psychophysically. We believe that understanding shading ambiguity can be a useful metric for deciding between different definitions of ``shape representing contours."  We can also go the other way; we can use the dimension of the shading ambiguity to define contours (sets of points) where the surface information is mathematically more restricted by the shading than at a generic point, key among these points are ridges \cite{kunsbergVSS12}.  Because of the complexity of this analysis, it will be treated in a companion paper.


%


\section{Discussion}

Given a local image patch $I$, assumed to be Lambertian, we can consider the continuous pixel intensity information in terms of derivatives of intensity.  At every point in the image, we have the information contained in $ \{ I, I_x, I_y, I_{xx}, I_{xy}, I_{yy} \}$.
We could consider more derivatives, but, in the limit, it is unnecessary.  The second order derivatives are the minimum order needed to remove the explicit light source variables.

Let us consider how to use each informational element.  The intensity $I$ at a point alone is useless in considering shape information (or even tangent plane orientation), as we don't know the albedo.   With unknown albedo, any angle between the unknown light sources and unknown normal is possible.
(Of course, we have information from intensity at \emph{several} nearby points, but we are considering that information to be contained in the derivatives of $I$.)

The amount of information in $I_x$ and $I_y$ is usually expressed as a brightness gradient.  Unfortunately, given any surface with second fundamental form $II$ of full rank, we can find a set of light source positions that will result in that brightness gradient.  Thus, $\{I_x, I_y\}$ does not give us any information about the surface unless we assume a prior on light source positions: the set of surfaces before and after we consider the brightness gradient is the same.  In terms of counting conditions, the brightness gradient gives us two conditions on the scene but having to include the explicit light source positions in the equations adds at least two more parameters.   However, once we have solved for our surface, we can use the brightness gradient to solve for the light source direction.

Finally, consider the second derivatives: $\{I_{xx}, I_{xy}, I_{yy} \}$.  As we have shown, these are the lowest order derivatives of intensity that can factored into components of surface shape  and image properties, with no explicit dependence on the light source.  Thus, our shape from shading reconstruction efforts will focus on the use of these equations in order to create a light-source invariant algorithm.

 \subsection{Local Surface Ambiguity}

Although these shading equations are nonlinear, it is still helpful to count the number of free parameters to get an idea of the ambiguity in each local patch.  Note that the derivatives $\{I_{uu}, I_{uv}, I_{vv}\}$ only depend on the local third-order Taylor approximation of the surface, which is described in nine coefficients.  However, we have only three equations available.  Roughly, we must have six dimensions of ambiguity.   

Although Lambertian shading does not restrict the set of possible surfaces to a single surface, additional information like specularities, texture flows, and boundaries may add in the additional restrictions needed to calculate a single surface.  For example, Koenderink's theorem \cite{Koenderink84} restricts surface curvatures at an apparent boundary.  In addition, the view vector must lie in the tangent plane at a boundary.  This provides three more conditions to restrict the local Taylor approximation, yet we are still several dimensions short.   In general, some assumptions on the surface are required in order to return a finite solution set.  




\section{Conclusion}

The differential invariants of surfaces are curvatures. Thus a natural
framework for formulating surface inferences is in terms of differential geometry. We here
propose such a framework, by lifting the image information to a vector field (the shading flow
field) and formulating the shape-from-shading problem on it. Our goal is to find those (surface,
light source) pairs that are consistent with a given shading flow. Working with simplifying
assumptions, we develop the basic transport machinery in closed form and calculate the full
family of solutions.

Isophote vector field changes in the image have two causes. First, the need for the vector field to ``stay on the surface" implies that there is a portion of orientation change due to the changing foreshortening of the surface tangent planes. This is important where the vector field approaches an occluding boundary: it must become parallel to this boundary, regardless of the light source(s) positions \cite{Lawlor09}. There, the foreshortening factor dominates the other factors that contribute to the orientation structure.

The other portion of change is due to the light source field ``treating" different local tangent planes differently due to its projection onto them. However, because the light source does not change in the extrinsic $\mathbb{R}^3$ frame, this second portion of change can also be understood through purely surface properties. That is, the local light source change can be related solely to the surface curvatures ($dN$), modified by the image intensity. Thus, both portions of the shading flow change are, in the end, only dependent on surface properties and not dependent on the placement in $\mathbb{R}^3$ of the light source.

Finally, we close with a neurobiological point. It is known that the higher visual areas are
selective for surface properties, including their curvatures \cite{Connor08}. It is also known that many
different forms of orientation images, such as oriented texture noise and glossy patterns (see
references in \cite{fleming}) are perceived as surfaces. To our knowledge the calculations here are the
first example of how this inference might take place from the shading flow to surfaces. It
thus serves as a common ÒlanguageÓ for formulating feedback,  but also illustrates a need for additional information. Perhaps the importance of highlights, or texture elements, could select from the ambiguous family.  While we have extended the mathematics of the shape from shading flow problem to much more general situations than the examples treated here, much remains to be done with the differential geometric approach.

\newpage
\begin{appendices}
\appendix
\section{\\Proofs for Shading Equations $I_{uu}$ and $I_{uv}$} \label{App:AppendixA}

The proofs for the shading equations \ref{eq:shd-theorem2} and \ref{eq:shd-theorem3} are analogous to the proof for the first shading equation  \ref{eq:shd-theorem1}.  Rather than repeat the analysis almost verbatim, we instead describe where substitutions need to be made.

\subsection{For $I_{uv}$}
Here, we take the directional derivative of the constraint $\langle \vec{l_t}, dN(\vec{v}) \rangle$ in the $\vec{u}$ direction.
Thus, we need to modify 3.5a so that we take the directional derivative with respect to $\vec{u}$ rather than $\vec{v}$.
From then on, every time we see a $\vec{v}$ as a direction in which to take a derivative, we will instead have $\vec{u}$.  The analysis for $I_{uv}$ then follows the $I_{vv}$ case exactly except in one place:  

The first term in 3.14d (that was  $\vec{T_{l_t}} II \vec{v}$) is now $\vec{T_{l_t}} II \vec{u}$ and thus contributes a term rather than 0.   However this simplifies (using the Christoffel symbols $\Gamma^k_{ij})$ to:

\begin{subequations}
\begin{align}
\vec{T_{l_t}} & = -  \begin{bmatrix} \sum_{i, j} \Gamma^1_{i j}\vec{l_t}^i u^j & \sum_{i, j} \Gamma^2_{i j}\vec{l_t}^i u^j  \end{bmatrix} \\
 & = - \begin{bmatrix} \frac{f_x (\vec{l_t}^T II \vec{u})}{\sqrt{1 + f_x^2 + f_y^2}} &   \frac{f_y (\vec{l_t}^T II \vec{u})}{\sqrt{1 + f_x^2 + f_y^2}
}\end{bmatrix}\\
& = - \vec{l_t}^T II \vec{u} \frac{ \begin{bmatrix} f_x & f_y \end{bmatrix}}{\sqrt{1 + f_x^2 + f_y^2}} \\
& = - || \nabla I || \frac{ \nabla f}{\sqrt{1 + || \nabla f ||^2}}
\end{align}
\end{subequations}

Thus,

\begin{subequations}
\begin{align}
\vec{T_{l_t}} II \vec{u} & = - || \nabla I || \frac{ (\nabla f)^T II \vec{u} }{\sqrt{1 + || \nabla f ||^2}} \\
& = -   \frac{|| \nabla I|}{\sqrt{1 + || \nabla f ||^2}} \langle \nabla f,  dN( \vec{u}) \rangle
\end{align}
\end{subequations}

And this is precisely the extra term included in \ref{eq:shd-theorem2}.

\subsection{For $I_{vv}$}
Here, we take the directional derivative of the constraint $\langle \vec{l_t}, dN(\vec{u}) \rangle$ in the $\vec{u}$ direction.  Thus, the only difference between this proof and the previous one for $I_{uv}$ is that the terms containing $dN(\vec{v})$ in the previous proof are now $dN(\vec{u})$.  This leads to only minor change.

In 3.9b, the second term is now $-\vec{u}[n_3] G^{-1} H (\vec{u})$.  In the previous proofs, the corresponding term enters an inner product with $\vec{l_t}$ (which is equal to 0) and thus contributes nothing.  Here, we must calculate it separately.

Straightforward calculation yields:

\begin{subequations}
\begin{align}
\vec{u} [n_3] & =  \frac{\langle \nabla f, dN(\vec{u}) \rangle}{\sqrt{1 + ||\nabla f ||^2}} \\
\end{align}
\end{subequations}

Thus, 
\begin{subequations}
\begin{align}
\langle \vec{l_t} , -\vec{u}[n_3] G^{-1} H (\vec{u}) \rangle & =  - \frac{\langle \nabla f, dN(\vec{u}) \rangle}{\sqrt{1 + ||\nabla f ||^2}} \langle \vec{l_t}, dN(\vec{u}) \rangle  \\
& = - ||\nabla I || \frac{\langle \nabla f, dN(\vec{u}) \rangle}{\sqrt{1 + ||\nabla f ||^2}} 
\end{align}
\end{subequations}

Therefore, the sum of the two extra terms we get when applying the method for $I_{uu}$ is solely 

$$- 2 ||\nabla I || \frac{\langle \nabla f, dN(\vec{u}) \rangle}{\sqrt{1 + ||\nabla f ||^2}} $$

Adding in these respective terms into the formulas for $I_{uv}$ and $I_{uu}$  and changing the appropriate $\vec{v}$ to $\vec{u}$ give the second order shading equations stated above.

\section{\\PDE Equations} \label{App:AppendixB}
For completeness, we add the shading equations in PDE fashion, without the differential geometric notation.

\begin{subequations}
\begin{align}
0 & = f_{xx} \\
& + \left(\frac{(1 + f_x^2) f_{xy}^2  - 2 f_x f_y f_{xy} f_{xx} + (1 + f_y^2) f_{xx}^2}{(1 + f_x^2 + f_y^2)^2}\right) I \\
& + 2 I_x \frac{f_x f_{xx} + f_y f_{xy}}{{1 + f_x^2 + f_y^2}} \\
& - \big(I_x (f_{yy} f_{xxx} - f_{xy} f_{xxy}) + I_y (-f_{xy} f_{xxx} + f_{xx} f_{xxy})\big)
\end{align}
\end{subequations}

\begin{subequations}
\begin{align}
0 & = f_{yy} \\
& +  \left(\frac{(1 + f_x^2) f_{yy}^2  - 2 f_x f_y f_{xy} f_{yy} + (1 + f_y^2) f_{xy}^2}{(1 + f_x^2 + f_y^2)^2}\right) I \\
& + 2 I_y \frac{f_x f_{xy} + f_y f_{yy}}{{1 + f_x^2 + f_y^2}} \\
& - \big(I_x (f_{yy} f_{xyy} - f_{xy} f_{yyy}) + I_y (-f_{xy} f_{xyy} + f_{xx} f_{yyy})\big)
\end{align}
\end{subequations}

\begin{subequations}
\begin{align}
0 & = f_{xy} \\
& + \left(\frac{f_{xy} (f_{xx} + f_{yy} + f_y^2 f_{xx} + f_x^2 f_{yy})  -  f_x f_y (f_{xy}^2 +  f_{xx} f_{yy}}{(1 + f_x^2 + f_y^2)^2}\right) I\\
& + \frac{f_x I_y f_{xx} + (f_x I_x + f_y I_y) f_{xy} + f_y I_x f_{yy}}{{1 + f_x^2 + f_y^2}} \\
& - \big(I_x (f_{yy} f_{xxy} - f_{xy} f_{xyy}) + I_y (-f_{xy} f_{xxy} + f_{xx} f_{xyy})\big)
\end{align}
\end{subequations}

\end{appendices}

\newpage

{\small
\bibliographystyle{siam}
\bibliography{Kunsberg_Shading_Ambiguity}
}

\end{document}